\newtheorem{definition}{Definition}
\newtheorem{theorem}{Theorem}
\newtheorem{remark}{Remark}
\newcommand{\real}{{\mathbb{R}}}
\newcommand{\reals}{\real}
\renewcommand{\natural}{{\mathbb{N}}}
\newcommand{\naturals}{\natural}
\newcommand{\xfree}{\mathcal X_{\text{free}}}
\newcommand{\xobs}{\mathcal X_{\text{obs}}}
\newcommand{\xgoal}{\mathcal X_{\text{goal}}}
\newcommand{\xinit}{x_{\mathrm{init}}}
\newcommand{\FMT}{$\text{FMT}^*\, $}
\newcommand{\PRM}{$\text{gPRM}\,$}
\newcommand{\RRTstar}{RRT$^{\ast}\,$}
\newcommand{\RRTsharp}{RRT$^{\#}\,$}
\newcommand{\PRMstar}{PRM$^\ast\,$}
\newcommand{\gprm}{\gamma_{\mathrm{PRM}}}
\begin{document}

\title{Deterministic Sampling-Based Motion Planning: Optimality, Complexity, and Performance}

\author{Lucas Janson}
\address{Department of Statistics, Stanford University, USA}
\author{Brian Ichter}
\address{Department of Aeronautics and Astronautics, Stanford University, USA}
\author{Marco Pavone\thanks{Corresponding author; email: pavone@stanford.edu\\
This work was originally presented at the 17th International Symposium on Robotics Research, ISRR 2015. This extended and revised version includes a novel section on extensions of this work to $k$-nearest-neighbor algorithms, non-PRM algorithms, non-lattice sampling, non-uniform sampling, and kinodynamic motion planning. It also includes a significantly extended simulation section.}}
\address{Department of Aeronautics and Astronautics, Stanford University, USA}
%
%
\maketitle

\abstract{Probabilistic sampling-based algorithms, such as the probabilistic roadmap (PRM) and the rapidly-exploring random tree (RRT) algorithms,   represent one of the most successful approaches to robotic motion planning, due to their strong theoretical properties (in terms of probabilistic completeness or even asymptotic optimality) and remarkable practical performance. Such algorithms are probabilistic in that they compute a path by connecting independently and identically distributed (i.i.d.) random points in the configuration space. Their randomization aspect, however, makes several tasks challenging, including certification for safety-critical applications and use of offline computation to improve real-time execution. Hence, an important open question is whether similar (or better) theoretical guarantees and practical performance could be obtained by considering  deterministic, as opposed to random sampling sequences. The objective of this paper is to provide a rigorous answer to this question. 
Specifically, we first show that PRM, for a certain selection of
tuning parameters and deterministic low-dispersion sampling sequences, is
\emph{deterministically} asymptotically optimal, i.e., it returns a
path whose cost converges deterministically to the optimal one as the
number of points goes to infinity. Second, we characterize the
convergence rate, and we find that the factor of sub-optimality can be
very explicitly upper-bounded in terms of the  $\ell_2$-dispersion of
the sampling sequence and the connection radius of PRM. Third, we show
that an asymptotically optimal version of PRM exists with
computational and space complexity arbitrarily close to $O(n)$ (the
theoretical lower bound), where $n$ is the number of points in the
sequence. This is in stark contrast to the $O(n\, \log n)$ complexity
results for existing asymptotically-optimal probabilistic planners.
Fourth, we show that our theoretical results and insights extend to other batch-processing algorithms such as \FMT\!,  to non-uniform sampling strategies, to $k$-nearest-neighbor implementations, and to differentially-constrained problems.
Finally, through numerical experiments,  
we show that planning with deterministic low-dispersion sampling generally provides superior performance in terms of path cost and success rate. 
}

\section{Introduction}

\emph{Probabilistic} sampling-based algorithms represent a particularly successful approach to robotic motion planning problems  \citep{ST-WB-DF:05, SL:06}. The key idea behind probabilistic sampling-based
algorithms is to avoid the explicit construction of the configuration
space (which can be prohibitive in complex planning problems) and
instead conduct a search that probabilistically probes the
configuration space with independently and identically distributed (i.i.d.) random samples. This probing is enabled by
a collision detection module, which the motion planning algorithm
considers as a ``black box" \citep{SL:06}. Examples, roughly in chronological order, include
the probabilistic roadmap algorithm (PRM)  \citep{LEK-PS-JCL-MHO:96}, expansive space trees (EST) \citep{DH-JCL-RM:99a,
  JMP-NB-LEK:04}, Lazy-PRM  \citep{RB-LK:00},  the rapidly
exploring random trees algorithm (RRT)  \citep{SML-JJK:01}, sampling-based roadmap of trees (SRT)
\citep{EP-KEB-BYC-ea:05}, rapidly-exploring roadmap \citep{RA-SP-AD:11}, \PRMstar and \RRTstar
\citep{SK-EF:11},  \RRTsharp  \citep{OA-PT:13}, and the fast marching tree algorithm (\FMT\!) \citep{LJ-ES-AC-ea:15}. 
A central result is that these algorithms provide \emph{probabilistic completeness} guarantees in the sense that the probability that the planner fails to return a solution, if one exists, decays to zero as the number of samples approaches infinity \citep{JB-LK-RM-ea:00}. Recently, it has been proven that  \RRTstar\!, \PRMstar\!, \RRTsharp\!, and \FMT are asymptotically optimal, i.e., the cost of the returned solution converges almost surely to the optimum \citep{SK-EF:11,OA-PT:13,LJ-ES-AC-ea:15}.

It is natural to wonder whether the theoretical guarantees and practical performance of sampling-based algorithms would hold if these algorithms were to be de-randomized, i.e., run on a \emph{deterministic}, as opposed to random sampling sequence. This is an important question, as de-randomized planners would significantly simplify the certification process (as needed for safety-critical applications), enable the use of offline computation (particularly important for planning under differential constraints or in high-dimensional spaces---exactly the regime for which sampling-based planners are designed), and, in the case of lattice sequences, drastically simplify a number of operations (e.g., locating nearby samples). This question has received relatively little attention in the literature. 
Specifically, previous research \citep{MSB-SML-KO-ea:01, SML-MB-SRL:04, DH-JCL-HK:06} has focused on the performance of de-randomized versions of sampling-based planners in terms of convergence to feasible paths. 
A number of deterministic variants of the PRM algorithm were shown to be resolution complete (i.e., provably converging to a feasible solution as $n\to \infty$) and, perhaps surprisingly,  offer superior performance on an extensive set of numerical experiments \citep{MSB-SML-KO-ea:01, SML-MB-SRL:04}. 
Prompted by these results, a number of deterministic low-dispersion, incremental sequences have been specifically tailored to motion planning problems \citep{AY-SML:04, SRL-AY-SML:05, AY-SJ-SML-ea:09}.

The results in \citep{MSB-SML-KO-ea:01, SML-MB-SRL:04, DH-JCL-HK:06}
are restricted to convergence to feasible, as opposed to
\emph{optimal} paths. Several questions are still open. Are there
advantages of i.i.d. sampling in terms of convergence to an optimal
path? Can convergence rate guarantees for the case of deterministic
sampling be provided, similar to what is done  for probabilistic
planners in \citep{LJ-ES-AC-ea:15, AD-GM-KEB:15}? For a given number
of samples, are there advantages in terms of computational and space
complexity? The objective of this paper is to rigorously address these
questions. Our focus is on the PRM algorithm. However, we show that
similar results hold for many of the existing batch (i.e., not anytime) algorithms, including Lazy-PRM and \FMT\!.

\emph{Statement of Contributions:} The contributions of this paper are as follows. 
\begin{description}
\item[\bf Deterministic asymptotic optimality of sampling-based
  planning:] We show that the PRM algorithm is asymptotically optimal
  when run on \emph{deterministic} sampling sequences in $d$ dimensions whose
  $\ell_2$-dispersion is upper-bounded by $\gamma\, n^{-1/d}$, for
  some $\gamma \in \reals_{>0}$ (we refer to such sequences as
  deterministic low-dispersion sequences), and with a connection
  radius $r_n \in \omega( n^{-1/d})$\footnote{For $f, g:\naturals \to
    \reals$, we say $f\in O(g)$ if there exists $n_0\in
      \naturals$ and
    $k\in \reals_{>0}$ such that $|f(n)|\leq k\, |g(n)|$ for all
    $n\geq n_0$. We say $f\in \Omega(g)$ if there exists $n_0\in
    \naturals$ and $k\in \reals_{>0}$ such that $|f(n)|\geq k\,
    |g(n)|$ for all $n\geq n_0$. Finally, we say $f\in \omega(g)$ if
    $\lim_{n\to \infty} \, f(n)/g(n) = \infty$.}. In other words, the
  cost of the solution computed over $n$ samples converges
  deterministically to the optimum as $n \to \infty$. As a comparison,
  the analogue result for the case of i.i.d. random sampling holds
  almost surely or in probability \citep{SK-EF:11, LJ-ES-AC-ea:15} (as opposed to deterministically) and requires a connection radius $\Omega\left (( \log(n)/n )^{1/d} \right)$, i.e., bigger. 
\item[\bf Convergence rate:] We show that, in the absence of obstacles, the factor of sub-optimality of PRM is upper-bounded by $2D_n/(r_n-2D_n)$, where $D_n$ is the $\ell_2$-dispersion of the sampling sequence. A slightly more sophisticated result holds for the obstacle-cluttered case.  As a comparison, the analogue result for the case of i.i.d. sampling only holds in probability and is much more involved (and less interpretable) \citep{LJ-ES-AC-ea:15}.  Our results could be instrumental to the certification of sampling-based planners.
\item[\bf Computational and space complexity:] We prove that PRM, when
  run on a low-dispersion  lattice, has computational and space
  complexity $O(n^2 \, r_n^d)$. As asymptotic optimality can be
  obtained using $r_n\in \omega( n^{-1/d})$, there exists an
  asymptotically optimal version of PRM with computational and space
  complexity $\omega(n)$, where $O(n)$ represents the theoretical lower bound (as, at the very least, $n$ operations need to be carried out to load samples into memory). As a comparison, the analogous complexity results for the case of i.i.d. sampling are of order $O(n \, \log(n))$ \citep{SK-EF:11}.  
  \item[\bf Extensions:] We extend the contributions in all three of
    the preceding categories to much broader settings. Specifically,
    we find that many of the results that hold for PRM run on a
    low-dispersion lattice hold either exactly or approximately for
    $k$-nearest-neighbor algorithms, for
    other batch-processing algorithms such as \FMT\!, for non-lattice
    low-dispersion sampling such as the Halton sequence, for non-uniform sampling, and for
    kinodynamic planning.
  
\item[\bf Experimental performance:] Finally, we compare performance
  (in terms of path cost and success rates) of deterministic low-dispersion sampling
  versus i.i.d. sampling on a variety of test cases ranging
  from two to eight dimensions and including geometric, kinematic
  chain, and kinodynamic planning problems. 
   In all our examples, for a given
  number of samples, deterministic low-dispersion sampling  performs
  no worse and sometimes substantially better than  i.i.d. sampling (this is not even accounting for the potential significant speed-ups in runtime, e.g., due to fast nearest-neighbor indexing). 
\end{description}

The key insight behind our theoretical results (e.g., smaller required
connection radius, better complexity, etc.) is the factor difference
in dispersion between deterministic low-dispersion sequences versus
i.i.d. sequences, namely $O(n^{-1/d})$ versus $O((\log n)^{1/d}
\, n^{-1/d})$ \citep{PD:83, HN:92}. Interestingly, the same
$O(n^{-1/d})$  dispersion can be achieved with
non-i.i.d. \emph{random} sequences, e.g., randomly rotated and offset
lattices. As we will show, these sequences enjoy the same
\emph{deterministic} performance guarantees of deterministic
low-dispersion sequences and retain many of the benefits of
deterministic sampling (e.g., fast nearest-neighbor
indexing). Additionally, their ``controlled" randomness may allow them
to address some potential issues with deterministic sequences (in
particular lattices), e.g., avoiding axis-alignment issues in which
entire rows of samples may become infeasible due to alignment along an
obstacle boundary. In this perspective, achieving deterministic
guarantees is really a matter of i.i.d. sampling versus non-i.i.d. low-dispersion sampling (with deterministic sampling as a prominent case), as opposed to random versus deterministic. Collectively,  our results, complementing and corroborating those in \citep{MSB-SML-KO-ea:01, SML-MB-SRL:04},  strongly suggest that both the study and application of sampling-based algorithms should adopt non-i.i.d. low-dispersion sampling. From a different viewpoint, our results provide a theoretical bridge between sampling-based algorithms with i.i.d.  sampling and non-sampling-based algorithms on regular grids (e.g., D* \citep{AS:95} and related kinodynamic variants \citep{MP-RK-AK:09}).

\emph{Organization:} This paper is structured as follows. In Section
\ref{sec:back} we provide a review of known concepts from
low-dispersion sampling, with a focus on $\ell_2$-dispersion. In
Section \ref{sec:setup} we formally define the optimal path planning
problem. In Section \ref{sec:theory} we present our three main
theoretical results for planning with low-dispersion sequences:
asymptotic optimality, convergence rate, and computational and space
complexity. In Section
\ref{sec:ext} we extend the results from Section~\ref{sec:theory} to
other batch-processing algorithms, non-uniform sampling, and
kinodynamic motion planning. In Section \ref{sec:sims}  we present results from numerical experiments supporting our statements. Finally, in Section \ref{sec:conc}, we draw some conclusions and discuss directions for future work.

\section{Background}\label{sec:back}
A key characteristic of any set of points on a finite domain is its
$\ell_2$-dispersion. This concept will be particularly useful in elucidating
the advantages of deterministic sampling over i.i.d. sampling. As
such, in this section we review some relevant properties and results
on the $\ell_2$-dispersion.

\begin{definition}[$\ell_2$-dispersion]\label{def:L2}
For a finite, nonempty set $S$ of points contained in a
$d$-dimensional compact Euclidean subspace $\mathcal{X}$ with positive
Lebesgue measure, its $\ell_2$-dispersion $D(S)$ is defined as

\begin{equation}\label{eq:L2}
\begin{split}
D(S) &:= \sup_{x\in \mathcal{X}} \min_{s \in S} \|s-x\|_2 = \sup\left\{r> 0 : \exists x\in \mathcal{X} \text{ with }
  B(x,r)\cap S = \emptyset \right\}, \\
\end{split}
\end{equation}
where $B(x,r)$ is the \emph{open} ball of radius $r$ centered at $x$.
\end{definition}

Intuitively, the $\ell_2$-dispersion quantifies how well a space is covered by
a set of points $S$ in terms of the largest open Euclidean ball that touches none of
the points. The quantity $D(S)$ is important in the analysis of path optimality 
as an optimal path may
pass through an empty ball of radius $D(S)$. Hence, $D(S)$ bounds how closely
any path tracing through points in $S$ can possibly approximate that
optimal path.


The $\ell_2$-dispersion of a set of deterministic or random points is
often hard to compute, but luckily it can be bounded by the
more-analytically-tractable $\ell_{\infty}$-dispersion. The
$\ell_{\infty}$-dispersion is defined by simply
replacing the $\ell_2$-norm in
equation~\eqref{eq:L2} by the $\ell_{\infty}$-norm, or max-norm. The
$\ell_{\infty}$-dispersion of a set $S$, which we will denote by
$D_{\infty}(S)$, is related to the $\ell_2$-dispersion in $d$ dimensions
by \citep{HN:92},
\[ D_{\infty}(S) \le D(S) \le \sqrt{d} D_{\infty}(S), \] which allows us to bound $D(S)$ when $D_{\infty}(S)$ is easier to compute. In particular,
an important result due to \citep{PD:83} is that the
$\ell_{\infty}$-dispersion of $n$ independent uniformly sampled points on
$[0,1]^d$ is $O((\log(n)/n)^{1/d})$ with probability 1. Corollary to
this is that the $\ell_2$-dispersion is also $O((\log(n)/n)^{1/d})$
with probability 1.

Remarkably, there are deterministic sequences with $\ell_2$-dispersions of order 
$O(n^{-1/d})$, an improvement by a factor $\log(n)^{1/d}$. (Strictly speaking, one should distinguish point sets, where the number of points is specified in advance, from sequences \citep{SML-MB-SRL:04}---in this paper we will simply refer to both as ``sequences.'') For
instance, the Sukharev sequence \citep{AGS:71}, whereby $[0,1]^d$ is gridded into $n
= k^d$ hypercubes and their centers are taken as the sampled points,
can easily be shown to have $\ell_2$-dispersion of
$(\sqrt{d}/2)\,n^{-1/d}$ for $n = k^d$ points. As we will see in
Section~\ref{sec:theory}, the use of sample sequences with lower
$\ell_2$-dispersions confers on PRM a number of beneficial properties,
thus justifying the use of certain deterministic sequences instead of i.i.d. ones. In the remainder of the paper, we will refer to sequences with $\ell_2$-dispersion
of order  $O(n^{-1/d})$ as \emph{low-dispersion} sequences.  A natural question to ask is whether we can use a
sequence that \emph{minimizes} the $\ell_2$-dispersion. Unfortunately,
such an optimal sequence is only known for $d=2$, in which case it is represented by
the centers of the equilateral triangle tiling \citep{SL:06}. In this
paper, we will focus on the Sukharev \citep{AGS:71} and Halton sequences \citep{JHH:60}, except in two
dimensions when we will consider the triangular lattice as well, though
there are many other deterministic sequences with $\ell_2$-dispersion
of order $O(n^{-1/d})$; see \citep{AY-SML:04, SRL-AY-SML:05,
  AY-SJ-SML-ea:09} for other examples. 

\section{Problem Statement}\label{sec:setup}
The problem formulation follows that in \citep{LJ-ES-AC-ea:15} very closely.
Let $\mathcal X =[0,\, 1]^d$ be the configuration
space, where $d\in \naturals$. Let $\xobs$ be a closed set representing
the obstacles, and let $\xfree = \text{cl}(\mathcal X \setminus
\mathcal \xobs)$ be the obstacle-free space, where $\text{cl}(\cdot)$
denotes the closure of a set. The initial condition is $\xinit\in\xfree$, and the
goal region is $\xgoal\subset\xfree$. A specific path planning problem
is characterized by a triplet $(\xfree, \xinit, \xgoal)$. A function
$\sigma : [0, 1] \to  \reals^d$ is a \emph{path} if it is continuous
and has bounded variation. If $\sigma(\tau)\in \xfree$ for all
$\tau\in [0,\, 1]$, $\sigma$ is said to be
\emph{collision-free}. Finally, if $\sigma$ is collision-free,
$\sigma(0) = \xinit$, and $\sigma(1)\in cl(\xgoal)$, then $\sigma$ is
said to be a \emph{feasible path} for the planning problem $(\xfree,
\xinit, \xgoal)$.

The goal region $\xgoal$ is said to be \emph{regular} if there exists
$ \xi > 0$ such that $\forall y \in \partial \xgoal$, there exists $ z
\in \xgoal$ with $B(z; \xi) \subseteq \xgoal$ and $y \in \partial B(z;
\xi)$ (the notation $\partial \mathcal X$ denotes the boundary of set $\mathcal X$). Intuitively, a regular goal region is a smooth set with a
boundary that has bounded curvature. Regularity is a technical
condition we will use in our results, but is in fact quite weak, as
nearly any goal region can be well-approximated by a regular goal
region. Furthermore, we will say $\xgoal$
is $\xi$-regular if $\xgoal$ is regular for the parameter
$\xi$. Denote the set of all paths by $\Sigma$. A cost function for the planning
problem $(\xfree, \xinit, \xgoal)$ is a function $c:\Sigma \to
\reals_{\geq 0}$; in this paper we will focus on the \emph{arc length}
function. The is then defined as
follows:

\begin{quote}{\bf \normalsize Optimal path planning problem}:
  {\normalsize Given a path planning problem $(\xfree, \xinit, \xgoal)$ with an arc length cost function $c:~\Sigma \to
\reals_{\geq 0}$, find a feasible path $\sigma^{*}$ such that
$c(\sigma^{*} )= \min\{c(\sigma):\sigma \text{ is feasible}\}$. If no
such path exists, report failure.}
\end{quote}

A path planning problem can be arbitrarily difficult if the solution
traces through a narrow corridor, which motivates the standard notion of path
clearance \citep{SK-EF:11}. For a given $\delta>0$, define the
$\delta$-interior of $\xfree$ as the set of all configurations that are at
least a distance $\delta$ from $\xobs$. Then a path is said to have 
strong $\delta$-clearance if it lies entirely inside the
$\delta$-interior of $\xfree$. Further, a path planning problem with optimal
path cost $c^*$ is called $\delta$-robustly feasible if there exists a
strictly positive sequence $\delta_n \rightarrow 0$, and a sequence
$\{\sigma_n\}_{i=1}^n$ of feasible paths such that $\lim_{n
  \rightarrow \infty} c(\sigma_n) = c^*$ and for all $n \in
\mathbb{N}$, $\sigma_n$ has strong $\delta_n$-clearance, $\sigma_n(1)
\in \partial \mathcal{X}_{\text{goal}}$, and $\sigma_n(\tau) \notin
\mathcal{X}_{\text{goal}}$ for all $\tau \in (0,1)$.

Lastly, in this paper we will be considering a generic form of the PRM
algorithm. That is, denote by \PRM (for generic PRM) the algorithm given by
Algorithm~\ref{prmalg}. The function $\texttt{SampleFree}(n)$ is a function that returns a set of $n
\in \mathbb{N}$ points in $\mathcal{X}_{\text{free}}$.  Given a set
of samples $V$, a sample $v\in V$, and a
positive number $r$, $\texttt{Near}(V, v, r)$ is a
function that returns the set of samples 
  $\{u \in V: \|u-v\|_2 < r\}$.  Given two samples  $u, v\in V$, $\texttt{CollisionFree}(u, v)$ denotes the boolean function which is true if and only if the line joining $u$ and $v$ does not intersect an obstacle.
Given a graph $(V, E)$, where the node set  
  $V$ contains $\xinit$ and $E$ is the edge set, 
$\texttt{ShortestPath}(\xinit,V, E)$  is a function returning a shortest path from $\xinit$ to $\xgoal$ in the
graph $(V, E)$ (if one exists, otherwise it reports failure). Deliberately, we do not  specify the definition of
$\texttt{SampleFree}$ and have left $r_n$ unspecified, thus allowing
for any sequence of points---deterministic or random---to be used, with any connection
radius. These ``tuning" choices will be studied in Section \ref{sec:theory}. We want to clarify that we are in no way proposing
a new algorithm, but just defining an umbrella term for the PRM class
of algorithms which includes, for instance, sPRM and \PRMstar as
defined in \citep{SK-EF:11}.
\begin{algorithm}
\caption{gPRM Algorithm}
\label{prmalg}
\algsetup{linenodelimiter=}
\begin{algorithmic}[1]
\STATE $V \leftarrow \{x_{\text{init}}\} \cup \texttt{SampleFree}(n)$; $E \leftarrow \emptyset$
\FORALL{$v \in V$}
\STATE $X_{\text{near}} \leftarrow \texttt{Near}(V \backslash \{v\}, v, r_n)$ 
\FOR{$x \in X_{\text{near}}$} \label{line:forXnear}
\IF{$\texttt{CollisionFree}(v, x)$}  \label{line:collisionfreecheck}
\STATE $E \leftarrow E \cup \{(v, x)\}  \cup \{(x, v)\}$ 
\ENDIF
\ENDFOR
\ENDFOR
\RETURN $\texttt{ShortestPath}(x_{\text{init}}, V, E)$
\end{algorithmic}
\end{algorithm}

\section{Theoretical Results}\label{sec:theory}
In this section we present our main theoretical results.
We begin by proving that \PRM on low-dispersion sequences
is asymptotically optimal, in the deterministic sense, for
connection radius $r_n \in \omega(n^{-1/d})$. Previous work has
required $r_n$ to be at least $\Omega((\log(n)/n)^{1/d})$ for asymptotic optimality.

\begin{theorem}[Asymptotic optimality with deterministic sampling]\label{thrm:AO}
Let $(\mathcal{X}_{\text{free}}, x_{\text{init}},
\mathcal{X}_{\text{goal}})$ be a $\delta$-robustly feasible path
planning problem in $d$ dimensions, with $\delta>0$ and
$\mathcal{X}_{\text{goal}}$ $\xi$-regular. Let $c^*$ denote the arc
length of an optimal path $\sigma^*$,
and let $c_n$ denote the arc length of the path returned by \PRM (or
$\infty$ if \PRM returns failure) with $n$ vertices whose
$\ell_2$-dispersion is $D(V)$ using a radius $r_n$. Then if $D(V) \le
\gamma n^{-1/d}$ for some $\gamma \in \mathbb{R}$ and
\begin{equation}
\label{radiuscondition}
n^{1/d}r_n \rightarrow \infty,
\end{equation}
then  $\lim_{n \rightarrow \infty} \, c_n = c^*$.
\end{theorem}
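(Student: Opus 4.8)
The plan is to establish the two bounds $\liminf_{n\to\infty} c_n \ge c^*$ and $\limsup_{n\to\infty} c_n \le c^*$ separately. The lower bound is essentially free: whenever \PRM returns a path it is a concatenation of collision-free segments between samples in $\xfree$, it starts at $\xinit$, and it ends at a vertex in $\xgoal\subseteq \text{cl}(\xgoal)$, so it is feasible and thus has cost $\ge c^*$; moreover I will exhibit a feasible roadmap path for all large $n$, so failure (and hence $c_n=\infty$) does not occur eventually. All of the work therefore goes into the upper bound, which I would prove by constructing, for each $\eps>0$ and all sufficiently large $n$, an explicit path in the graph $(V,E)$ of cost at most $c^*+2\eps$; since \texttt{ShortestPath} returns something no longer, this gives $c_n\le c^*+2\eps$, and $\eps\downarrow 0$ closes the argument.

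Fix $\eps>0$. By $\delta$-robust feasibility there is a feasible path $\sigma$ with strong $\delta'$-clearance for some $\delta'\in(0,\delta]$ and with $c(\sigma)\le c^*+\eps$. Reparametrize $\sigma$ by arc length on $[0,c(\sigma)]$, so it is $1$-Lipschitz, and lay down waypoints $x_k:=\sigma(kL_n)$ for $k=0,\dots,m$ with uniform arc-length step $L_n$ (to be tuned) and $m=\lceil c(\sigma)/L_n\rceil$, so that $x_0=\xinit$ and $x_m=\sigma(c(\sigma))\in\partial\xgoal$. Because $D(V)=\sup_{x}\min_{s\in V}\|s-x\|_2$, each $x_k$ has a sample $v_k\in V$ within Euclidean distance $D(V)$ (take $v_0=\xinit$, which is a vertex by construction). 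The candidate roadmap path is $v_0,v_1,\dots,v_m$, followed by one final vertex in $\xgoal$ handled below.

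Two things must be checked. \emph{Connectivity:} by the triangle inequality $\|v_k-v_{k+1}\|_2\le \|v_k-x_k\|_2+\|x_k-x_{k+1}\|_2+\|x_{k+1}-v_{k+1}\|_2\le L_n+2D(V)$, so an edge is present once $L_n+2D(V)<r_n$. \emph{Collision-freeness:} writing a generic point of a connecting segment as $p=(1-t)v_k+tv_{k+1}$ and comparing it to $q=(1-t)x_k+tx_{k+1}$ gives $\|p-q\|_2\le D(V)$, while $q$ lies on the chord between two points of $\sigma$ at arc-length separation $\le L_n$, whence $\mathrm{dist}(q,\sigma)\le L_n/2$; thus $p$ lies within $D(V)+L_n/2$ of $\sigma$, and since $\sigma$ has clearance $\delta'$ the segment misses $\xobs$ provided $D(V)+L_n/2<\delta'$. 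Taking for instance $L_n:=r_n-2D(V)$ (shrunk infinitesimally to make the \texttt{Near} inequality strict) satisfies connectivity as soon as $r_n>2D(V)$, and satisfies collision-freeness for all large $n$ because then $L_n,D(V)\to 0$.

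It remains to bound the cost and to attach the goal. The roadmap path uses at most $m\le c(\sigma)/L_n+1$ edges, each of length at most $L_n+2D(V)$, giving total cost at most $c(\sigma)\bigl(1+2D(V)/L_n\bigr)+\bigl(L_n+2D(V)\bigr)$; with $L_n=r_n-2D(V)$ the multiplicative overhead equals $2D(V)/(r_n-2D(V))$, precisely the sub-optimality factor advertised for the obstacle-free case. This is where the two hypotheses combine and form the crux of the proof: since $D(V)\le\gamma n^{-1/d}$ and $n^{1/d}r_n\to\infty$, we have $r_n/D(V)\ge \gamma^{-1}n^{1/d}r_n\to\infty$, so both $2D(V)/(r_n-2D(V))\to 0$ and $L_n+2D(V)\le r_n\to 0$. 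For the goal, since $x_m\in\partial\xgoal$, I would invoke $\xi$-regularity to obtain a ball $B(z,\xi)\subseteq\xgoal$ with $x_m\in\partial B(z,\xi)$ and, for $D(V)<\xi$, locate a sample inside this ball (hence in $\xgoal$) at distance $O(D(V))$ from $x_m$ and within $r_n$ of $v_{m-1}$ by the same estimates; this both certifies that \texttt{ShortestPath} succeeds and adds only $O(D(V))$ to the cost. Collecting the estimates yields $\limsup_n c_n\le c(\sigma)\le c^*+\eps$, and letting $\eps\downarrow 0$ completes the proof. The delicate point throughout is the choice of $L_n$: it must vanish fast enough relative to the fixed clearance $\delta'$ to keep straight-line edges collision-free, yet remain of order $r_n$ so that the multiplicative overhead $2D(V)/L_n$ dies; that both can hold simultaneously is exactly what the gap between the $O(n^{-1/d})$ dispersion and the $\omega(n^{-1/d})$ radius provides, and this---together with the goal-attachment bookkeeping---is the main obstacle.
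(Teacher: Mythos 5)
Your construction is in substance the paper's own proof: both cover a strongly clearance-robust, near-optimal path by points spaced on the order of the connection radius, use the dispersion bound to place a sample within $D(V)$ of each such point, argue that consecutive samples are joined by collision-free edges of the \PRM graph, and telescope the edge lengths to obtain a multiplicative overhead of order $D(V)/(r_n-2D(V))$, which vanishes precisely because $D(V)\le\gamma n^{-1/d}$ while $n^{1/d}r_n\to\infty$. Your goal-attachment step via $\xi$-regularity is also the same idea as the paper's final two balls, and your explicit verification that every point of a connecting chord lies within $D(V)+L_n/2$ of the clearance path is actually spelled out more carefully than in the paper.

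There is, however, one genuine gap. You set $L_n:=r_n-2D(V)$ and then assert that collision-freeness holds ``for all large $n$ because then $L_n,D(V)\to 0$.'' But the theorem does not assume $r_n\to 0$; the only hypothesis is $n^{1/d}r_n\to\infty$, which is consistent with $r_n$ being constant or even growing. If, say, $r_n\equiv 1$ and the clearance $\delta'$ is $0.01$, your waypoints are spaced at arc length roughly $1$ apart, the condition $D(V)+L_n/2<\delta'$ fails for every $n$, and the chords you rely on need not avoid $\xobs$; the additive term $L_n+2D(V)\le r_n$ does not vanish either. The repair is exactly the device the paper introduces: choose an auxiliary sequence $R_n\le r_n$ with $R_n\to 0$ and $n^{1/d}R_n\to\infty$ (always possible), run your construction with step $L_n=R_n-2D(V)$, and observe that the graph built with radius $r_n$ contains every edge of length below $R_n$. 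With that one-line fix your argument closes. A minor related point: contrary to your closing remark, $L_n$ need not remain of order $r_n$ for the overhead $2D(V)/L_n$ to die --- any $L_n$ with $n^{1/d}L_n\to\infty$ suffices, which is what makes the auxiliary $R_n$ harmless.
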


\begin{proof}
Fix $\varepsilon>0$. By the $\delta$-robust feasibility of the
problem, there exists a $\sigma_{\varepsilon}$ such that
$c(\sigma_{\varepsilon}) \le (1+\varepsilon/3)c^*$ and
$\sigma_{\varepsilon}$ has strong $\delta_{\varepsilon}$-clearance for
some $\delta_{\varepsilon}>0$, see Figure~\ref{fig:rob}.
\begin{figure}[h]
\centering
\subfigure[]{\label{fig:rob} \includegraphics[width=0.3\textwidth]{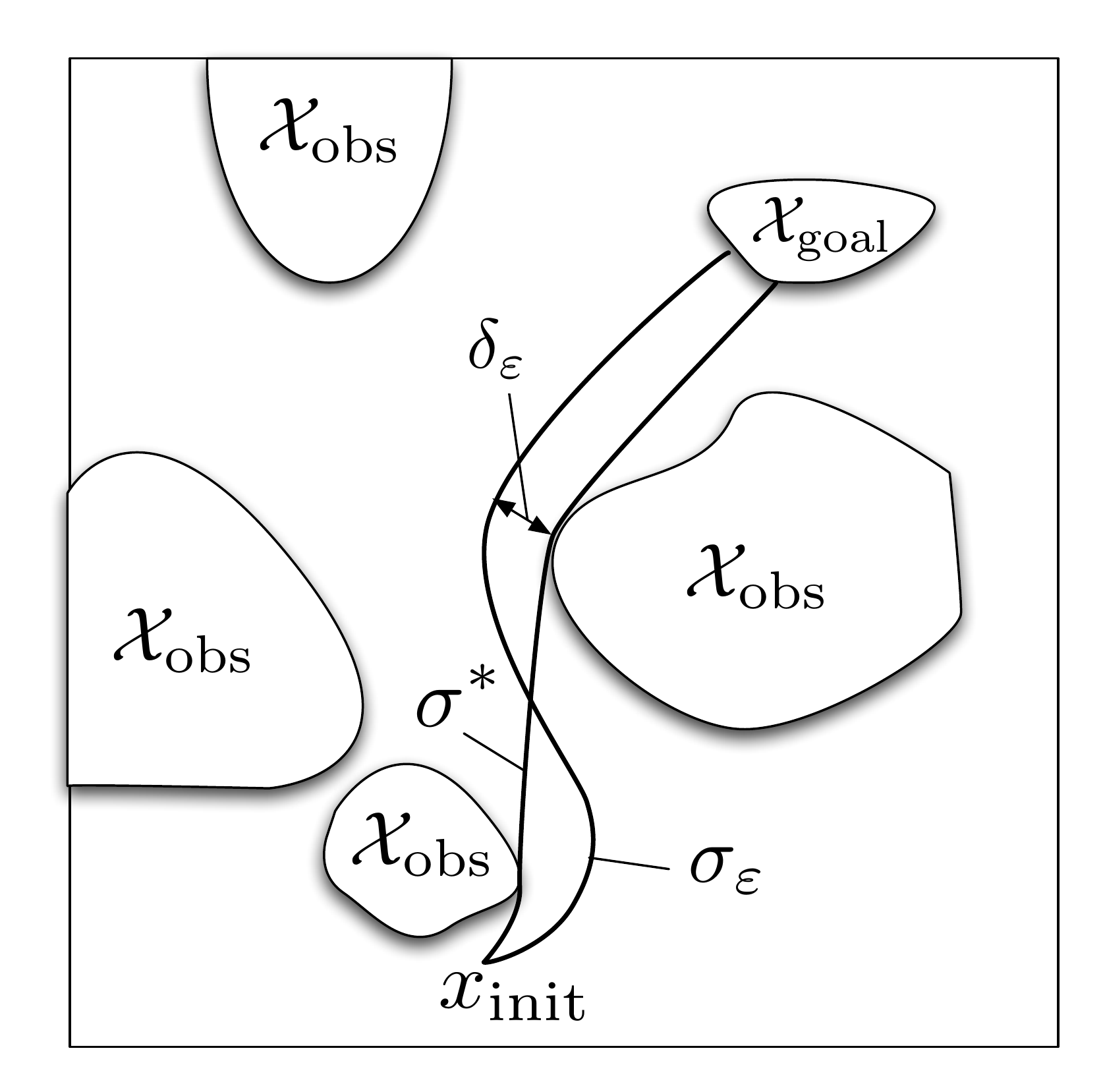}} \, 
\subfigure[]{\label{fig:balls} \includegraphics[width=0.65\textwidth]{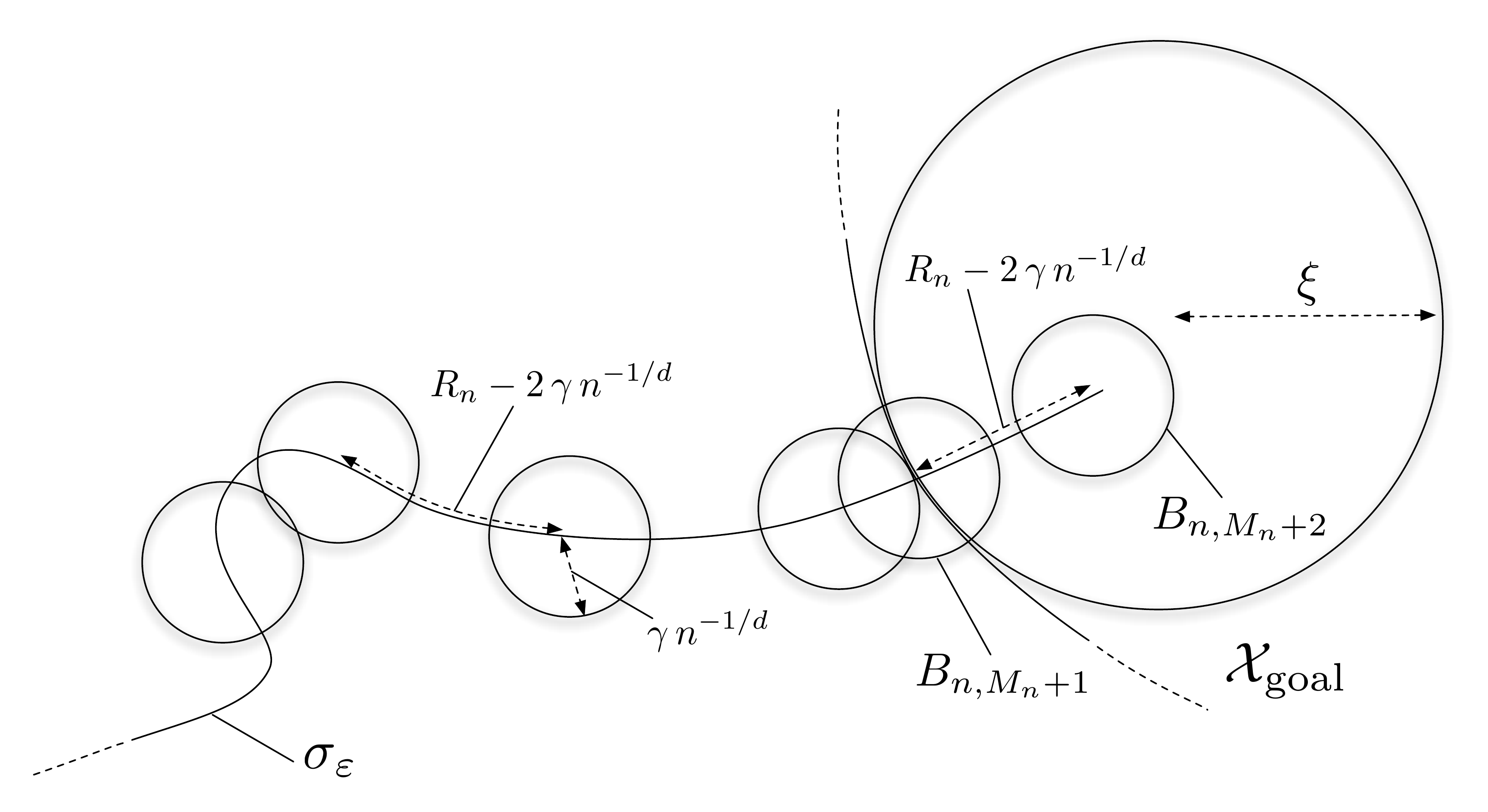} }
\caption{Figure \ref{fig:rob}: Illustration in 2D of $\sigma_{\varepsilon}$ as the shortest strongly
  $\delta_{\varepsilon}$-robust feasible path, as compared to the optimal path
  $\sigma^*$, as used in the proof of Theorem~\ref{thrm:AO}. Figure \ref{fig:balls}: Illustration in 2D of the construction of $B_1,\dots,B_{M_n+2}$ in
  the proof of Theorem~\ref{thrm:AO}.}
\label{fig:rob_balls}
\end{figure}
Let $R_n$ be a sequence such that $R_n
\le r_n$, $n^{1/d}R_n \rightarrow \infty$, and $R_n \rightarrow 0$,
guaranteeing that there exists a $n_0\in \mathbb{N}$ such that for all
$n\ge n_0$,
\begin{equation}
\label{Rn}
(4+6/\varepsilon)\gamma n^{-1/d} \le R_n \le
\min\{\delta_{\varepsilon},\xi,c^*\varepsilon/6\}.
\end{equation}


For any $n\ge n_0$, construct the closed balls $B_{n,m}$ such that
$B_{n,i}$ has radius $\gamma n^{-1/d}$ and has center given by
tracing a distance $(R_n-2\gamma n^{-1/d})i$ from $x_0$ along $\sigma_{\varepsilon}$
(this distance is positive by equation~\eqref{Rn}) until $(R_n-2\gamma
n^{-1/d})i>c(\sigma_{\varepsilon})$. This will
generate $M_n = \lfloor c(\sigma_{\varepsilon})/(R_n-2\gamma n^{-1/d}) \rfloor$
balls. Define $B_{n,M_n+1}$ to also have radius $\gamma n^{-1/d}$ but
center given by the point where $\sigma_{\varepsilon}$ meets
$\mathcal{X}_{\text{goal}}$. Finally, define $B_{n,M_n+2}$ to have
radius $\gamma n^{-1/d}$ and center defined by extending the center of
$B_{n,M_n+1}$ into $\mathcal{X}_{\text{goal}}$ by a distance
$R_n-2\gamma n^{-1/d}$ in the direction perpendicular to $\partial
\mathcal{X}_{\text{goal}}$. Note that by equation~\eqref{Rn},
$B_{n,M_n+2} \subset \mathcal{X}_{\text{goal}}$. See
Figure~\ref{fig:balls} for an illustration.

Since the dispersion matches the radii of all the $B_{n,m}$, each
$B_{n,m}$ has at least one sampled point within it. Label these points
$x_1, \dots, x_{M_n+2}$, with the subscripts matching their respective
balls of containment. For notational convenience, define
$x_0:=x_{\text{init}}$. Note that by construction of the balls, for $i
\in \{0,\dots,M_n+1\}$, each pair of consecutively indexed points
$(x_i,x_{i+1})$ is separated by no more than $R_n \le
r_n$. Furthermore, since $R_n \le \delta_{\varepsilon}$ by equation~\eqref{Rn} above, there cannot be an obstacle between any such pair, and thus
each pair constitutes an edge in the \PRM graph. Thus, we can
upper-bound the cost $c_n$ of the \PRM solution by the sum of the lengths of the edges
$(x_0,x_1),\dots,(x_{M_n+1},x_{M_n+2})$:
\begin{equation*}
\begin{split}
c_n &\le \sum_{i=0}^{M_n+1} \|x_{i+1}-x_i\|  \le (M_n+2)R_n \le \frac{c(\sigma_{\varepsilon})}{R_n-2\gamma n^{-1/d}}R_n + 2R_n \\
&\le c(\sigma_{\varepsilon}) + \frac{2\gamma n^{-1/d}}{R_n-2\gamma n^{-1/d}}c(\sigma_{\varepsilon}) + 2R_n = c(\sigma_{\varepsilon}) + \frac{1}{\frac{R_n}{2\gamma n^{-1/d}}-1}c(\sigma_{\varepsilon})  + 2R_n \\
&\le \left (1+\frac{\varepsilon}{3}\right)c^* + \frac{1}{\frac{3}{\varepsilon}+1}\left (1+\frac{\varepsilon}{3}\right)c^*  + \frac{\varepsilon}{3}c^* = (1+\varepsilon)c^*. \\
\end{split}
\end{equation*}
The second inequality follows from the fact that the distance between
$x_i$ and $x_{i+1}$ is upper-bounded by the distance between the
centers of $B_{n,i}$ and $B_{n,i+1}$ (which is at most $R_n -
2\gamma n^{-1/d}$) plus the sum of their radii (which is $2\gamma
n^{-1/d}$). The last inequality follows from the facts that
$c(\sigma_{\varepsilon})\le (1+\varepsilon/3)c^*$ and equation~\eqref{Rn}.
\end{proof}

Note that if \PRM using $r_n>2D(V)$ reports failure, then there are two possibilities: (i) a solution does not exist, or (ii) all solution paths go through corridors whose widths are smaller than $2 \, D(V)$. Such a result can be quite useful in practice, as solutions going through narrow corridors could be undesirable anyways (see \cite[Section 5]{SML-MB-SRL:04} for the same conclusion).

Next, we relate the solution cost returned by \PRM to the best cost of
a path with strong $\delta$-clearance in
terms of the $\ell_2$-dispersion of the samples used. This is a
generalization of previous convergence rates,
e.g. \citep{LJ-ES-AC-ea:15}, which only apply to obstacle-free
spaces. Previous work also defined
convergence rate as, for a fixed level of suboptimality $\varepsilon$,
the rate (in $n$) that the probability of returning a
greater-than-$\varepsilon$-suboptimal solution goes to zero. In
contrast, we compute the rate (in $n$) that solution suboptimality
approaches zero. Lastly, previous work focused on asymptotic rates in
big-$O$ notation, while here we provide exact upper-bounds for finite
samples.

\begin{theorem}[Convergence rate in terms of dispersion]\label{thrm:convrate}
Consider the simplified problem of finding the shortest feasible path between
two points $x_0$ and $x_f$ in $\xfree$, assuming that both the
initial point and final point have already been sampled. Define
\[\delta_{\text{max}} = \sup\{\delta>0: \exists \text{ a feasible }
\sigma\in\Sigma\text{ with strong $\delta$-clearance}\}\]
and assume $\delta_{\text{max}}>0$. For all
$\delta<\delta_{\text{max}}$, let $c^{(\delta)}$ be the cost of the
shortest path with strong $\delta$-clearance. Let $c_n$ be
the length of the path returned by running \PRM on $n$ points whose
$\ell_2$-dispersion is $D_n$ and using a connection radius
$r_n$. Then for all $n$ such that $r_n>2D_n$ and $r_n<\delta$, 
\begin{equation}\label{eq:rate}
c_n \le \left(1+\frac{2D_n}{r_n-2D_n}\right)c^{(\delta)}.
\end{equation}
\end{theorem}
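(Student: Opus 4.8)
The plan is to adapt the ball-covering construction from the proof of Theorem~\ref{thrm:AO} to the finite-sample, obstacle-cluttered setting, replacing the limiting arguments with explicit bookkeeping of the constants. Let $\sigma^{(\delta)}$ denote a shortest feasible path with strong $\delta$-clearance, so that its arc length is exactly $c^{(\delta)}$. The idea is to string together sampled points lying close to $\sigma^{(\delta)}$, use the clearance to certify that the resulting edges are collision-free, and bound the total length of this polygonal path from above.

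First I would lay down \emph{centers} $y_0, y_1, \dots, y_N$ along $\sigma^{(\delta)}$, where $y_0 = x_0$, $y_N = x_f$, and $y_i$ is the point at arc length $i(r_n - 2D_n)$ from $x_0$, with $N$ the first index for which this arc length reaches $c^{(\delta)}$; note $r_n - 2D_n > 0$ by hypothesis. Around each interior center I place the closed ball $B(y_i, D_n)$. By the first equality in Definition~\ref{def:L2}, every point of $\mathcal X$ is within $D_n$ of some sample, so each such ball contains at least one sampled point; I choose one and call it $x_i$ (and take $x_0, x_f$ to be the given endpoint samples).

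Second, I would verify that each consecutive pair $(x_i, x_{i+1})$ forms an edge of the \PRM graph. The length bound is immediate from the triangle inequality: $\|x_i - x_{i+1}\| \le \|y_i - y_{i+1}\| + 2D_n \le (r_n - 2D_n) + 2D_n = r_n$, where $\|y_i - y_{i+1}\|$ is bounded by the arc length between the centers. For collision-freeness, the key estimate is that every point $p$ on the segment $[x_i, x_{i+1}]$ lies within $r_n - D_n$ of $\sigma^{(\delta)}$: indeed $p$ is within $D_n$ of the chord $[y_i, y_{i+1}]$, which in turn lies within $r_n - 2D_n$ of the path point $y_i$. Since $\sigma^{(\delta)}$ has clearance $\delta > r_n > r_n - D_n$, the distance from $p$ to $\xobs$ is at least $\delta - (r_n - D_n) > 0$, so the segment avoids the obstacles. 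This geometric step — showing that straightening the path between nearby samples does not push the edge out of the $\delta$-interior of $\xfree$ — is where I expect the real work to be, since it is the only place obstacles genuinely enter the argument.

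Finally, I would sum the edge lengths. Bounding each edge by the arc length of its sub-path plus the radii of the balls at its two endpoints — and crucially using that the terminal balls at $x_0$ and $x_f$ collapse to the exact given samples, contributing $D_n$ rather than $2D_n$ — the sum of arc lengths telescopes to $c^{(\delta)}$ and the radius terms collect to $2D_n(N-1)$. Since $(N-1)(r_n - 2D_n) < c^{(\delta)}$ gives $N - 1 < c^{(\delta)}/(r_n - 2D_n)$, I obtain $c_n \le c^{(\delta)} + 2D_n(N-1) < \left(1 + \tfrac{2D_n}{r_n - 2D_n}\right)c^{(\delta)}$, as claimed. The condition $r_n < \delta$ is used only in the collision-free step, while $r_n > 2D_n$ is used only to keep the center spacing positive and the final denominator meaningful.
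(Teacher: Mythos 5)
Your proposal is correct and follows essentially the same route as the paper's proof: covering the shortest $\delta$-clear path with balls of radius $D_n$ whose centers are spaced $r_n-2D_n$ apart in arc length, using the dispersion to place a sample in each ball, certifying each consecutive edge via the clearance $\delta>r_n$, and summing edge lengths to get $c^{(\delta)}+2D_nM_n$ with $M_n\le c^{(\delta)}/(r_n-2D_n)$. The only differences are cosmetic---you certify collision-freeness pointwise along each chord where the paper invokes a collision-free ball of radius $r_n-D_n$ around $x_{i-1}$, and you organize the final bookkeeping by attributing radius contributions to centers rather than via the floor-function arithmetic.
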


\begin{proof}
Let $\sigma_{\delta}$ be a feasible path of length $c^{(\delta)}$ with
strong $\delta$-clearance. Construct the balls $B_1,\dots,B_{M_n}$
with centers along $\sigma_{\delta}$ as in the proof of
Theorem~\ref{thrm:AO} (note we are not constructing $B_{M_n+1}$ or
$B_{M_n+2}$), except with radii $D_n$ and centers separated by a
segment of $\sigma_{\delta}$ of arc-length $r_n-2D_n$. Note
that $M_n=\lfloor c^{(\delta)}/(r_n-2D_n)\rfloor$. Then by definition
each $B_i$ contains at least one point $x_i$. Furthermore, each $x_i$ is connected to
$x_{i-1}$ in the \PRM graph (because $x_i$ is contained in the ball of
radius $r_n-D_n$ centered at $x_{i-1}$, and that ball is collision-free),
and $x_f$ is connected to $x_{M_n}$ as well. Thus $c_n$ is upper-bounded by
the path tracing through $x_0,x_1,\dots,x_{M_n},x_f$:
\begin{equation*}
\begin{split}
c_n &\le \|x_1-x_0\| + \sum_{i=2}^{M_n}\|x_i-x_{i-1}\| + \|x_f-x_{M_n}\| \le r_n-D_n + \sum_{i=2}^{M_n}r_n + \|x_f-x_{M_n}\| \\
&\le \left(M_nr_n - D_n\right) + \left(\left(\frac{c^{(\delta)}}{r_n-2D_n}-\left\lfloor
  \frac{c^{(\delta)}}{r_n-2D_n}\right\rfloor\right)(r_n-2D_n) + D_n\right) \\
&= c^{(\delta)} + 2D_nM_n \le \left(1+\frac{2D_n}{r_n-2D_n}\right)c^{(\delta)}, \\
\end{split}
\end{equation*}
where the second and third inequalities follow by considering the farthest
possible distance between neighboring points, given their inclusion in
their respective balls. 
\end{proof}
\begin{remark}[Convergence rate in obstacle-free environments]
Note that when there are no obstacles, $\delta_{\text{max}}=\infty$
and $c^{(\delta)}=\|x_f-x_0\|$ for all $\delta>0$. Therefore, an
immediate corollary of Theorem~\ref{thrm:convrate} is that the
convergence rate in free space of \PRM to the \emph{optimal} solution
is upper-bounded by $2D_n/(r_n-2D_n)$ for $r_n>2D_n$.
\end{remark}

\begin{remark}[Practical use of convergence rate]
Theorem \ref{thrm:convrate} provides a convergence rate result to a
shortest path with strong $\delta$-clearance. This result is useful
for two main reasons. First, in practice, the objective of path
planning is often to find a high-quality path with some ``buffer distance"
from the obstacles, which is precisely captured by the notion of
$\delta$-clearance.  Second, the convergence rate in equation
\eqref{eq:rate} could be used to certify the performance of gPRM (and
related batch planners) by placing some assumptions on the obstacle
set (e.g., minimum separation distance among obstacles and/or
curvature of their boundaries). For instance, consider a
deterministic low-dispersion sequence with dispersion upper bounded
by $\gamma\, n^{-1/d}$, and assume one has time to plan on $\bar n$
samples. Choose $r_{\bar n} = 2\psi \gamma \, \bar n^{-1/d}$ for some
$\psi>1$ and choose $\delta_0 >r_{\bar n}$ (one can
readily verify that such a choice of $r_{\bar n} $ satisfies the
assumptions of Theorem \ref{thrm:convrate}). Then one can state the
deterministic guarantee ``For \emph{all} planning problems where there
exists a feasible $\delta_0$-clear path (see Figure
\ref{fig:convrate}), the cost of the path returned is within a factor $1/(\psi -1)$ of the cost of the \emph{best} $\delta_0$-clear path." For given values of $\psi$ and $\delta_0$, one can therefore ``certify'' the performance of the planner for a desired target performance.
%
%
%
%
An interesting avenue for future
research is to use information about the curvature of the obstacles to
quantify the difference between $c^{(\delta)}$ and the true optimal
cost $c^*$.
\end{remark}

\begin{figure}[h!]
\center
\includegraphics[width=.3\textwidth]{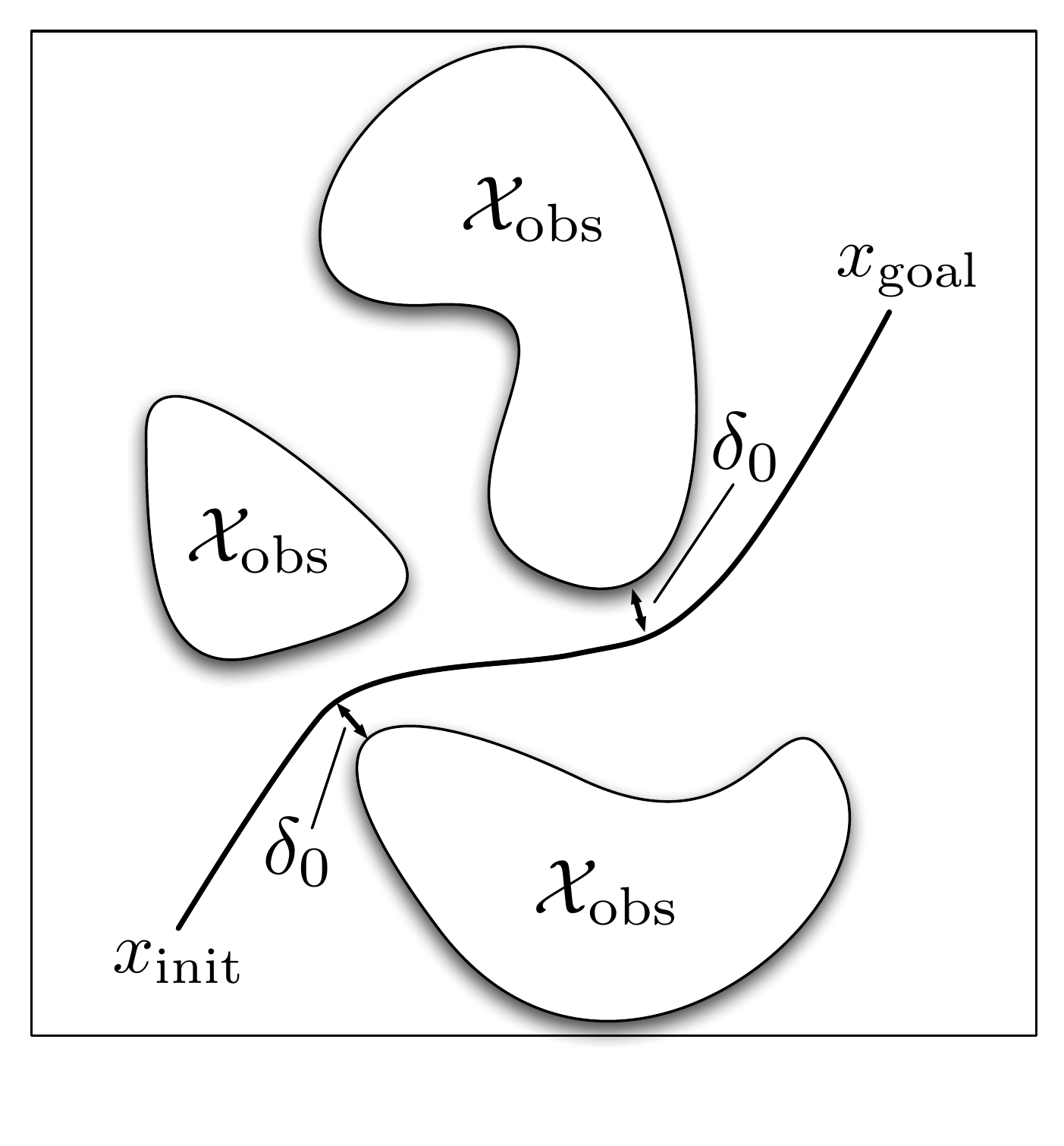} 
\caption{An example of a planning problem with a feasible $\delta_0$-clear path. For a given clearance parameter $\delta_0$ and suboptimality factor $\psi$, one can readily determine the number of samples (and, hence, computation time) that is \emph{deterministically} guaranteed to return a path whose cost is within $1/(\psi -1 )$ of the cost of the \emph{best} $\delta_0$-clear path, for \emph{all} planning problems where a feasible $\delta_0$-clear path exists.
} 
\label{fig:convrate}
\end{figure}

Both the asymptotic optimality and convergence rate results can be
extended to other batch planners such as Lazy-PRM or \FMT\!, as
discussed in Section~\ref{sec:othalg}. 

Lastly, we show that using a low-$\ell_2$-dispersion lattice sample
set, an asymptotically-optimal (AO) version of \PRM can be run that has lower-order
computational complexity than any existing AO algorithm, namely
$\omega(n)$ instead of $O(n\log(n))$.

\begin{theorem}[Computational complexity with deterministic
  sampling]\label{thrm:complexity}
  \PRM run on $n$ samples arranged in a $d$-dimensional cubic lattice
  with connection radius $r_n$ satisfying
  equation~\ref{radiuscondition} has computational complexity
\begin{equation}
\label{complexity}
O(n^2r_n^d).
\end{equation}
Furthermore, as long as $nr_n^d \nrightarrow 0$, the computational
complexity is also $O(n^2r_n^d)$.
\end{theorem}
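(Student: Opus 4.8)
The plan is to account separately for the four kinds of work done by Algorithm~\ref{prmalg}: (i) sampling and initialization (line~1), (ii) the neighbor queries $\texttt{Near}$ (line~3), (iii) the $\texttt{CollisionFree}$ checks and edge insertions in the double loop (lines~4--8), and (iv) the final $\texttt{ShortestPath}$ call (line~10). I would first isolate the one geometric fact that drives everything. On a $d$-dimensional cubic lattice of $n$ points in $[0,1]^d$ the spacing is $h = \Theta(n^{-1/d})$, and the lattice points inside a Euclidean ball of radius $r_n$ about a given point are exactly those at offsets $h z$, $z\in\mathbb{Z}^d$, with $h\|z\|<r_n$; their number is sandwiched between the integer-point counts of balls of radius $(r_n\mp\sqrt d\,h)/h$ and is therefore $\Theta\big((r_n n^{1/d})^d\big)=\Theta(n r_n^d)$. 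Here is exactly where the hypothesis enters: the boundary corrections and the ``$+1$'' in a box count $(2 r_n n^{1/d}+O(1))^d$ are absorbed precisely when $n^{1/d}r_n$ is bounded away from $0$, i.e.\ when $n r_n^d\nrightarrow 0$ (equivalently under the stronger radius condition~\eqref{radiuscondition}); without it each ball could contain only $O(1)$ points and the count would degenerate.

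The decisive step is to observe that on a lattice the query $\texttt{Near}(V\setminus\{v\},v,r_n)$ costs only $O(\#\text{neighbors})=O(n r_n^d)$ per vertex, with no hidden dependence on $n$ through a search structure. The set of integer offsets $z$ with $h\|z\|<r_n$ is identical for every lattice point, so it can be precomputed once in $O(n r_n^d)$ time and space, and each vertex's neighbors are then read off by $O(1)$ array-indexed lookups. This is precisely the advantage over i.i.d.\ sampling, where each $\texttt{Near}$ call needs a $k$-d-tree query contributing the extra $\log n$ factor behind the $O(n\log n)$ bound of probabilistic planners. Summing over the $n$ vertices gives $O(n^2 r_n^d)$ for the neighbor queries; since each of the $O(n^2 r_n^d)$ resulting candidate edges triggers one $O(1)$ black-box $\texttt{CollisionFree}$ call and at most one $O(1)$ edge insertion, lines~2--9 run in $O(n^2 r_n^d)$ in total.

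It then remains to bound the two remaining contributions against $n^2 r_n^d$. Line~1 is $O(n)$, and the roadmap handed to $\texttt{ShortestPath}$ has $|V|=n+1$ and $|E|=O(n^2 r_n^d)$, so a shortest-path query costs time linear in the graph size, $O(|V|+|E|)=O(n+n^2 r_n^d)$. Both are $O(n^2 r_n^d)$ exactly because $n r_n^d\nrightarrow 0$ forces $n^2 r_n^d=\Omega(n)$; this is the sole purpose of that hypothesis, and it is also what keeps the claimed bound consistent with the unavoidable $\Omega(n)$ cost of loading the samples. Adding the four contributions yields the stated $O(n^2 r_n^d)$, and since~\eqref{radiuscondition} implies $n r_n^d\nrightarrow 0$, the ``furthermore'' clause is the identical computation under the weaker hypothesis.

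I expect the most delicate accounting to be $\texttt{ShortestPath}$: charging it $O(|V|+|E|)$ treats the priority-queue operations of Dijkstra as unit cost, which is the natural convention once $\texttt{Near}$ and $\texttt{CollisionFree}$ are themselves counted as primitives, but a literal comparison-model Dijkstra would add a $|V|\log|V|=n\log n$ term that is \emph{not} dominated by $n^2 r_n^d$ in the very regime $n r_n^d=o(\log n)$ that makes the result interesting. I would therefore state the operation-counting model explicitly, and I would track the $\pm\sqrt d\,h$ boundary terms carefully in the neighbor-count estimate so that the $\Theta(n r_n^d)$ claim, on which everything rests, is honest rather than merely heuristic.
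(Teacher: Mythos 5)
Your accounting of steps (1) and (2) matches the paper's: the lattice structure bounds each $r_n$-neighborhood by $O(nr_n^d)$ candidate points, giving $O(n^2r_n^d)$ for neighbor finding, collision checking, and edge-cost computation, and the $nr_n^d \nrightarrow 0$ hypothesis is used exactly as you say, to let $n^2r_n^d$ absorb the $O(n)$ vertex-loading and storage terms. The problem is the $\texttt{ShortestPath}$ step, and you have correctly diagnosed it in your final paragraph but not actually repaired it. Charging Dijkstra $O(|V|+|E|)$ by ``treating the priority-queue operations as unit cost'' is not a convention the theorem can afford: the entire point of the result is to beat the $O(n\log n)$ complexity of i.i.d.-sampling planners, and in the interesting regime $nr_n^d = o(\log n)$ a comparison-based priority queue contributes an $n\log n$ term that genuinely dominates $n^2r_n^d$. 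Declaring those operations free assumes away precisely the obstruction the theorem claims to overcome, so as written your argument proves the bound only in a weakened cost model, not the stated claim.

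The missing idea, which is the one place the paper uses the \emph{cubic lattice} structure beyond neighborhood counting, is that every edge vector is $h z$ for one of the $O(nr_n^d)$ integer offsets $z$ in the precomputed stencil, so the graph has only $O(nr_n^d)$ \emph{distinct edge lengths}. The paper then invokes the Dijkstra implementation of \citep{JBO-KM-KS:10}, whose running time is linear in the number of edges plus the number of vertices times the number of distinct edge lengths; both quantities are $O(n^2r_n^d)$, which closes the gap. (The paper is explicit, in the discussion following Theorem~\ref{thrm:complexity} and in Section~\ref{sec:complex}, that this is where the lattice is indispensable: for Halton sampling or $k$-nearest variants the distinct-edge-length count blows up and the full claim no longer goes through.) Ironically, your precomputed-offset-stencil observation for $\texttt{Near}$ is exactly the fact needed to bound the number of distinct edge lengths; you had the ingredient in hand and applied it to the wrong step. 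With that substitution for your unit-cost convention, the rest of your argument is sound and follows essentially the same decomposition as the paper's.
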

\begin{proof}
The algorithm \PRM has three steps: (1) For each sampled point $x$, it needs to compute
which other sampled points are within a distance $r_n$ of $x$. (2) For
each pair of sampled points within $r_n$ of one another, their
connecting edge needs to be checked for collision, and if
collision-free, its edge-length needs to be computed. (3) The shortest
path through the graph produced in steps (1) and (2) from the initial
point to the goal region needs to be computed.

The lattice structure makes it trivially easy to bound a point's
$r_n$-neighborhood by a bounding hypercube with side-length $2r_n$,
ensuring only $O(nr_n^d)$ nearby points need to be checked for each of
the $n$ samples, so this step takes $O(n^2r_n^d)$ time.

In step (2), one collision-check and at most one cost computation needs to be
performed for each pair of points found in step (1) to be within $r_n$
of one another. The number of such pairs can be bounded above by the
number of sampled points times the size of each one's neighborhood,
leading to a bound of the form $O(n\cdot nr_n^d)$. Thus step (2) takes
$O(n^2r_n^d)$ time.

After steps (1) and (2), a weighted (weights correspond to edge
lengths) graph has been constructed on $n$ vertices with a number of
edges asymptotically upper-bounded by $n^2r_n^d$. One more property of
this graph, because it is on the cubic lattice, is that the number of
distinct edge lengths is asymptotically upper-bounded by $nr_n^d$. An
implementation of Dijkstra's algorithm for the single source shortest
path problem is presented in \citep{JBO-KM-KS:10} with running time
linear in both the number of edges and the number of vertices times
the number of distinct edge lengths. Since both are $O(n^2r_n^d)$,
that is the time complexity of step (3). 

The space complexity is proportional to the number of edges plus the
number of vertices, which are $O(n^2r_n^d)$ and $O(n)$,
respectively. By assumption that $nr_n\nrightarrow 0$, $O(n^2r_n^d)$
will be the (possibly co-) dominant term.
\end{proof}

Since Theorem~\ref{thrm:AO} allows $r_n\in \omega(n^{-1/d})$ while
maintaining AO, Theorem~\ref{thrm:complexity} implies that
cubic-lattice sampling allows for an AO algorithm with computational
and space complexity $\omega(n)$. All other AO algorithms in the
literature have computational and space complexity at least
$O(n\log(n))$. While the use of an $r_n\in \omega(n^{-1/d})$ makes the graph
construction phase (steps (1) and (2)) $\omega(n)$, step (3) would in
general take longer, as shortest-path algorithms on a general graph
with $n$ vertices requires $\Omega(n\log(n))$. Thus the lattice structure must be
leveraged to improve the complexity of step (3)---we discuss the
limitations implied by this in the next section.

\section{Extensions}\label{sec:ext}
In this section we discuss several extensions to the results presented  in Section \ref{sec:theory}.  In particular, we discuss
extensions  to alternative implementations and other types of batch-processing algorithms (Sections \ref{sec:othalg} and \ref{sec:complex}), to
non-uniform sampling sequences (Section \ref{sec:nus}), and to kinodynamic motion planning (Section \ref{sec:dkp}).

\subsection{Convergence Results for Other Batch-Processing Algorithms}
\label{sec:othalg}

The theoretical convergence results of the previous section
(Theorems~\ref{thrm:AO} and \ref{thrm:convrate}) extend to
alternative implementations of gPRM and other batch-processing algorithms.  We first discuss a variant of gPRM which makes connections based on $k$-nearest-neighbors instead of a fixed connection radius. This variant, referred to as $k$-nearest
gPRM, has the advantage of being more adaptive to different obstacle
spaces than its connection-radius counterpart (we refer the reader to
\cite[Section 5.3]{LJ-ES-AC-ea:15} for  a more detailed discussion
about the relative benefits of a $k$-nearest-neighbors
implementation). Assuming that $k$-nearest gPRM is implemented so that
the number of neighbors $k_n$ (a function of $n$, as $r_n$) is
taken for each sample to be roughly connected to its
$r_n$-neighborhood if there were no obstacles, then deterministic
asymptotic optimality and rate convergence guarantees can be readily
derived. Specifically, assuming $k_n = (1+\epsilon)n\zeta_dr_n^d$,
where $\epsilon>0$ and $\zeta_d$ is the volume of the unit ball in $d$
dimensions, then a graph constructed in $k$-nearest gPRM with $k_n$ neighbors will
asymptotically contain all the edges (and more) of the graph
constructed in gPRM with radius $r_n$, and thus will return a path at
least as low-cost.

A second example is Lazy-PRM \citep{RB-LK:00}, where
the extension of the theoretical convergence results  is straightforward, as the path returned by Lazy-PRM is
identical to that returned by gPRM (using the same radius). 

A third example is \FMT\! \citep{LJ-ES-AC-ea:15}. The only difference in the proof for
\FMT from that for gPRM is that $c_n$ cannot na\"ively be upper-bounded
by the length of a path tracing through a sequence of points in the
gPRM graph, as \FMT may not use some edges. However, as shown in the
proof of Lemma 4.2 (pages 912--913) in \cite[Appendix A]{LJ-ES-AC-ea:15},
\FMT\!'s $c_n$ \emph{can} be bounded by the length of a path tracing
through a sequence of points in the gPRM graph \emph{if those points
  are contained in a suitable sequence of balls}. In particular, the
sequence of balls needs to be sufficiently far from obstacles and
adjacent balls need to be sufficiently close together. It is easy
to check that the sequences $\{B_i\}$ used in the proofs of the
previous section satisfy these conditions.

\subsection{Complexity Results for Different Sampling Schemes and Other Batch-Processing Algorithms}
\label{sec:complex}

The complexity result in Theorem~\ref{thrm:complexity}, strictly speaking, only applies to  \PRM as defined in Algorithm \ref{prmalg} run on $n$ samples arranged in a $d$-dimensional cubic lattice. Using other low-dispersion but non-lattice sampling schemes, such as the
Halton/Hammersley sequence, would not give $O(nr_n^d)$ distinct edge
lengths, which precludes the use of the improved implementation
of Dijkstra's algorithm \citep{JBO-KM-KS:10}. A $k$-nearest-neighbor implementation
(where $k_n$ is selected as specified in Section \ref{sec:othalg} in order to ensure convergence), even on a lattice, 
would also no longer in general have $O(nr_n^d)$ distinct edge
lengths. Lastly, other 
batch-processing algorithms, such as \FMT\!, do not separate graph
construction and shortest-path computation, again precluding, at least
na\"ively, the use of the improved implementation
of Dijkstra's algorithm. 

In all the above cases, all the same computational advantages stated
in the proof of Theorem~\ref{thrm:complexity} would hold,
\emph{except} the advantage from the sped-up
shortest-path algorithm in step (3). In practice the shortest-path
algorithm is typically a trivial fraction of path planning runtime, as
it requires none of the much-more-complex edge-cost and collision-free
computations. In other words, the constant hidden in the big-O
notation for the shortest-path algorithm is drastically smaller than
that in the other steps. Thus, a practical approximation of the
runtime could ignore the shortest-path component, in which case the
result of Theorem~\ref{thrm:complexity} can be extended to all of the
aforementioned examples.

\subsection{Non-uniform Sampling}\label{sec:nus}
A popular method for incorporating prior knowledge about a problem is
to draw samples in a way that is not uniform throughout the
configuration space. Increased sampling density in areas that are
especially hard to traverse gives a motion planning algorithm help in
that area. Thus if these areas can be identified a priori or even
adaptively (see, for example, \citep{VB-MHO-vdS:99}), observed convergence can
be substantially sped up. Sampling non-uniformly in the i.i.d. setting
is often fairly straightforward, for instance by simply adding
sample-rejection rules to the uniform strategy \citep{DH-JCL-HK:06}. As long as there
remains a baseline density of samples everywhere in the space, similar
worst-case (i.e., even if the non-uniformity of sampling was poorly chosen
for the problem at hand) guarantees to the uniform case can still be
made (see, for example, \citep{LJ-ES-AC-ea:15}). The notion of density
is less clear in the deterministic case, but we can fall back on the notion of 
 $\ell_2$-dispersion instead. Indeed, the theoretical convergence results in
this paper are not specific to any deterministic sampling strategy,
but simply make a requirement on the $\ell_2$-dispersion. Thus, many
non-i.i.d., \emph{non-uniform} sampling strategies fit into the analysis
given here. Consider, for instance, using a sequence with
$\ell_2$-dispersion upper-bounded by $\gamma n^{-1/d}$ to sample $n/2$
points, and then sampling $n/2$ more points in any way (e.g., biased
towards sampling near obstacles). The resulting set of $n$ points will
have $\ell_2$-dispersion upper-bounded by $2^{1/d}\gamma n^{-1/d}$ and thus
still be a low-$\ell_2$-dispersion sequence. There is a rich
literature in different methods for non-uniform sampling, and an
interesting future direction will be to investigate how best to adapt
such methods to the deterministic context.

\subsection{Deterministic Kinodynamic Planning}\label{sec:dkp}
Another important extension is to motion planning with differential
constraints. In particular, we consider here the extension to systems with linear
affine dynamics of the form: $\dot{\mathbf{x}}(t) = A\mathbf{x}(t) + B\mathbf{u}(t) + \mathbf{c}$, where $A$, $B$, and $\mathbf{c}$ are constants. The extension of  the
$\ell_2$-dispersion-based analysis of this paper to that case poses some challenges. The key
roadblock is that the $\ell_2$-dispersion is no longer a particularly
accurate measure of how suitable a set of points is to track an optimal
differentially-constrained path. Essentially, Euclidean balls must
be replaced by ``perturbation'' balls \citep{ES-LJ-MP:15b}, which are
high-dimensional ellipses. To be clear, by a high-dimensional
ellipse we mean a volume defined by
\begin{equation}
\label{ellipse}
\{x: x^TQx < r\}
\end{equation}
for some positive-definite matrix $Q$ and scalar $r$. Although such
ellipses may be inner-bounded
by a Euclidean ball, this (poor) approximation adds an exponential
factor of the \emph{controllability index} of the pair $(A,B)$ \citep{ES-LJ-MP:15b}
to the analysis. (Assuming the pair $(A,B)$ is controllable, the controllability indices $\nu_i$ give a fundamental notion of how difficult a linear system is to control in the various directions, see \citep[p. 431]{TK:80} or \cite[pp. 150]{CTC:95} for a detailed treatment. The number of controllability indices is equal to the number of control inputs, that is to the number of columns of $B$. The maximum, that is  $\nu = \max \nu_i$, is referred to as the controllability index of the pair $(A,B)$.)
 The following theorem (whose proof is largely based on the analysis framework devised in \citep{ES-LJ-MP:15b}) summarizes the optimality  result. Here gDPRM is just Algorithm 1 except that
\texttt{Near} uses the cost in \cite[equation (2)]{ES-LJ-MP:15b}
instead of arc-length.
\begin{theorem}[Asymptotic optimality with deterministic sampling for systems with linear affine dynamics]\label{thrm:gDPRM}
Under the assumptions of \cite[Theorem VI.1]{ES-LJ-MP:15b}, gDPRM with
deterministic low-dispersion sampling is asymptotically-optimal for
\begin{equation}
\label{dprm-r}
r_n = C_1 n^{-1/(\nu d)}
\end{equation}
for some constant $C_1$, while gDPRM with i.i.d. uniform sampling is asymptotically optimal for
\begin{equation}
\label{dprm-ed}
r_n = C_2 \left(\frac{\log(n)}{n}\right)^{1/\tilde{D}}
\end{equation}
for some constant $C_2$, where $\tilde{D} = (d+\sum \nu_i^2)/2$,  the $\nu_i$ are the
controllability indices of the pair $(A,B)$, and $\nu = \max \nu_i$.
\end{theorem}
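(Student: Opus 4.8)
The plan is to replay the proof of Theorem~\ref{thrm:AO} almost verbatim, but with Euclidean balls replaced by the \emph{perturbation balls} of \citep{ES-LJ-MP:15b}. The enabling fact, established there (and analogous to the ball-tracing argument invoked for \FMT in Section~\ref{sec:othalg}), is that the cost $c_n$ of the gDPRM solution can be upper-bounded by the cost of a trajectory that threads through one sample drawn from each member of a suitable sequence of perturbation balls, provided (i) consecutive balls are close enough in the cost metric that their representative samples lie within connection cost $r_n$ of one another, and (ii) the balls remain sufficiently far from $\xobs$. So the whole argument reduces, exactly as in Theorem~\ref{thrm:AO}, to guaranteeing that a chain of such balls laid along a near-optimal clear trajectory each contains at least one sample.

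First I would fix $\eps>0$, invoke the robust-feasibility hypotheses of \cite[Theorem VI.1]{ES-LJ-MP:15b} to obtain a feasible trajectory $\sigma_\eps$ of cost at most $(1+\eps)c^*$ with strong clearance from $\xobs$, and string perturbation balls of cost-parameter proportional to $r_n$ along $\sigma_\eps$, with centers spaced so that consecutive representatives are connectable. The crux is then the containment guarantee. Since we control only the \emph{Euclidean} $\ell_2$-dispersion $D(V)\le \gamma n^{-1/d}$, I would inner-bound each perturbation ball by a Euclidean ball and demand that this inscribed ball have radius at least $\gamma n^{-1/d}$. A cost-$r_n$ perturbation ball is a high-dimensional ellipse of the form~\eqref{ellipse} whose semi-axes scale anisotropically according to the controllability indices; its \emph{smallest} semi-axis, corresponding to the hardest-to-control direction, scales like $r_n^{\nu}$ with $\nu=\max_i \nu_i$. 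Hence the inscribed Euclidean ball has radius $\Theta(r_n^{\nu})$, and the containment requirement becomes $r_n^{\nu}\gtrsim \gamma n^{-1/d}$, i.e.\ $r_n = C_1\, n^{-1/(\nu d)}$, which is exactly~\eqref{dprm-r}. Letting $n\to\infty$ and then $\eps\to 0$ closes the deterministic claim just as in Theorem~\ref{thrm:AO}.

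For the i.i.d.\ branch the containment guarantee is probabilistic rather than geometric: instead of inscribing a ball, one asks that every perturbation ball in the chain catch a sample with high probability, which by a union/coverage argument over the $\Theta(c^*/r_n)$ balls requires the expected occupancy $n\cdot\Vol(\text{ball})$ to dominate $\log n$. The volume of the cost-$r_n$ perturbation ball scales as $r_n^{\tilde D}$ with $\tilde D=(d+\sum_i \nu_i^2)/2$ \citep{ES-LJ-MP:15b}, so $n\, r_n^{\tilde D}\gtrsim \log n$ yields $r_n = C_2(\log n / n)^{1/\tilde D}$, which is~\eqref{dprm-ed}. The main obstacle, and the place where I would lean most heavily on \citep{ES-LJ-MP:15b}, is the precise geometric characterization of the perturbation ball: establishing that its minimal Euclidean semi-axis scales like $r_n^{\nu}$ (needed for the deterministic inscribing step) and that its volume scales like $r_n^{\tilde D}$ (needed for the i.i.d.\ coverage step). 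The deterministic exponent $1/(\nu d)$ is strictly worse than the naive $1/d$ of Theorem~\ref{thrm:AO} precisely because inner-bounding an anisotropic ellipse by a ball discards the easy directions and retains only the hardest one---this is the ``exponential factor of the controllability index'' flagged before the theorem.
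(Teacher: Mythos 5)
Your proposal is correct and follows essentially the same route as the paper's proof sketch: both reduce the claim to the ball-chain argument of Theorem~\ref{thrm:AO} carried out with the perturbation balls of \citep{ES-LJ-MP:15b}, obtain the deterministic rate \eqref{dprm-r} by requiring that the Euclidean ball inscribed in each perturbation ellipse (whose radius is governed by the hardest-to-control direction, hence the exponent $\nu$) exceed the $\ell_2$-dispersion $\gamma n^{-1/d}$, and treat the i.i.d. rate \eqref{dprm-ed} as the already-established result of \cite[Theorem VI.1]{ES-LJ-MP:15b}. If anything, your account makes the geometric mechanism behind the exponent $1/(\nu d)$ more explicit than the paper's brief appeal to a log-free, deterministic version of its Lemma IV.5.
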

\begin{proof}[Proof sketch] For the sake of
  brevity, we only describe here the changes needed to the theory in
  \citep{ES-LJ-MP:15b}, as the results have much in common.
The proof of \eqref{dprm-r} is nearly identical to that of
\cite[Theorem VI.1]{ES-LJ-MP:15b} except with a low-dispersion
analogue of \cite[Theorem IV.6]{ES-LJ-MP:15b} which uses the same
$r_n$ rate as in \eqref{dprm-r}. To see this analogue result, note
that \cite[Lemma IV.5]{ES-LJ-MP:15b} holds deterministically under
low-dispersion sampling with the $\log$ term removed from the
condition on the volume $\mu[T_k]$. Then the proof of the analogue of
\cite[Theorem IV.6]{ES-LJ-MP:15b} only requires construction
of the sets $S_k$ (no $T_k$), again without the $\log$ term. From the
deterministic version of \cite[Lemma IV.5]{ES-LJ-MP:15b}, we have that
for the scaling of $r_n$ as in \eqref{dprm-r}, \emph{every} set $S_k$ will
contain a sample, and equation \eqref{dprm-r} follows.

Finally, equation \eqref{dprm-ed} is a direct corollary of \cite[Theorem
VI.1]{ES-LJ-MP:15b}.
\end{proof}

If $\nu=1$ (i.e., all directions are ``equally difficult'' to control),
deterministic sampling and our analysis show all the same benefits as
in the case of the path planning (non-kinodynamic) problem by getting
rid of the $\log(n)$ term required by i.i.d. sampling without changing the exponent (as in this case, $\nu d = d$ and $\tilde{D} = d$). Note that a special case where $\nu=1$ is represented by the single-integrator model $\dot{\mathbf{x}}(t) = \mathbf{u}(t)$, which effectively reduces the kinodynamic planning problem to the path planning problem stated in Section \ref{sec:setup}---in this sense, Theorem \ref{dprm-r} recovers Theorem \ref{thrm:AO} when $\dot{\mathbf{x}}(t) = \mathbf{u}(t)$. However, in general,
the exponent for the case of deterministic low-dispersion sampling
(i.e., the exponent in equation~\eqref{dprm-r}) may be worse. For instance, for the double-integrator model in three dimensions, namely $\ddot{\mathbf{x}}(t) = \mathbf{u}(t)$ and $d=6$, the three controllability indices are $\nu_1=\nu_2=\nu_3=2$. As a consequence, one obtains $\nu d = 12$ and  $\tilde{D} = 9$, and the radius in equation \eqref{dprm-r} (i.e., for the deterministic case) is \emph{larger} that the radius for equation \eqref{dprm-ed} (i.e., for the case with i.i.d. uniform sampling). 

This is \emph{not} to say that 
deterministic sampling is necessarly inappropriate or not advantageous for differentially-constrained
problems, but just that the analysis used here is inadequate (most critically, we crudely inner-bound ellipses via 
Euclidean balls). Our analysis
does, however, suggest possible ways forward. One could consider a measure of
dispersion which applies more specifically to ellipses, and possibly
tailor a deterministic sequence to be low-dispersion in this sense. To our
knowledge, no assessment of sample sequences in terms of this type of
dispersion has been performed previously, and this represents a theoretically and practically important direction for future research (together with studying tailored notions of sampling sequences and dispersion for other classes of dynamical systems, e.g., driftless systems).

We will further study kinodynamic motion planning via deterministic sampling sequences through numerical experiments in Section \ref{sec:sims}.

\section{Numerical Experiments}\label{sec:sims}

In this section we numerically investigate the benefits of planning
with deterministic low-dispersion sampling instead of i.i.d. sampling. Section
\ref{sec:env} overviews the simulation environment used for this
investigation. Section \ref{sec:methods} details the deterministic
low-dispersion sequences used, namely, lattices and the Halton
sequence. Several simulations are then introduced and results compared
to i.i.d. sampling in Sections \ref{sec:tests} and
\ref{sec:sum}. Finally, we briefly discuss non-i.i.d., random, 
low-dispersion sampling schemes in Section \ref{sec:nondet}.

\subsection{Simulation Environment}\label{sec:env}

Simulations were written in C++, MATLAB, and Julia  \citep{JB-SK-VBS-AE:12}, and run using a Unix operating system with a 2.3 GHz processor and 8 GB of RAM. The C++ simulations were run through
the Open Motion Planning Library (OMPL) \citep{IAS-MM-LEK:12}. The
planning problems simulated in OMPL were rigid body problems,
the simulations in MATLAB involved point robots and kinematic chains,
and those in Julia incorporated kinodynamic constraints. For each planning
problem, the entire implementation of \PRM was held fixed (including
the sequence of $r_n$) except for the sampling strategy. Specifically,
for all simulations (except the chain and kinodynamic simulations), we use as connection radius $r_n  = \gprm \, (\log(n)/n)^{1/d}$, where $\gprm  = 2.2\left (1+1/d)^{1/d}(1/\zeta_d \right )^{1/d}$
and $\zeta_d$ is again the volume of the unit ball in $d$-dimensional
Euclidean space. This choice ensures asymptotic optimality both for
i.i.d. sample sequences \citep{SK-EF:11, LJ-ES-AC-ea:15} and deterministic low-dispersion sequences (Theorem \ref{thrm:AO}). Because this is an \emph{exact}
``apples-to-apples'' comparison, we do not present runtime results but
only results in terms of sample count, which have the advantage that
they do not depend on the specific hardware or software we use. (Note
that drawing samples represents a trivial fraction of the total algorithm
runtime. Furthermore, as mentioned in the introduction, deterministic
sampling even allows for possible speedups in computation.)
The code for these results can be found at
\url{https://github.com/stanfordasl}.

\subsection{Sampling Sequences}\label{sec:methods}

We consider two deterministic low-dispersion sequences, namely the Halton sequence \citep{JHH:60} and lattices. Halton sampling is based on a generalization of
the van der Corput sequence and uses prime numbers as its base to form
a deterministic low-dispersion sequence  of points \citep{JHH:60,
  SML-MB-SRL:04}. Lattices in this work were implemented as a triangular lattice in two
dimensions and a Sukharev grid in higher dimensions
\citep{AGS:71}. 

Along with the benefits described
throughout the paper, lattices also present some challenges
\citep{SML-MB-SRL:04}. First, for basic
cubic lattices with the same number of lattice points $k$ per side,
the total number of samples $n$ is constrained to $k^d$, which limits the
potential number of samples. For example, in 10 dimensions, the first four available sample counts are 1, 1024,
59,049, and 1,048,576. There are some strategies to allow for
incremental sampling \citep{AY-SML:04, SRL-AY-SML:05, AY-SJ-SML-ea:09},
but in this paper we overcome this difficulty by simply ``weighting"
dimensions. Explicitly, we allow each side to have a different
number of lattice points. Independently incrementing each side's
number of points by 1 increases the allowed resolution of $n$ by a
factor of $d$, as it allows all numbers of the form
$n=(k-1)^{m}(k)^{d-m}$. 

Second, lattices are sensitive to axis-alignment effects, whereby an axis-aligned obstacle can invalidate an entire axis-aligned row of samples. 
A simple
solution to this problem is to rotate the entire lattice by a fixed
amount (note this changes nothing about the $\ell_2$-dispersion or
nearest-neighbor sets). We chose to rotate each dimension by $10\pi$
degrees as an arbitrary angle far from zero (in practice, problems
often show a ``preferential'' direction, e.g., vertical, so there may
be an a priori natural choice of rotation angle). 

Finally, we include examples in SE(2) and SE(3) because they are
standard, even though the cost function is no longer arc-length in the
configuration space, but closer to arc-length in the
translational dimensions. As this is technically outside the theory in this paper, we
must consider an appropriate analogue of deterministic low-dispersion sampling. In
the na\"ive Sukharev lattice, each translational lattice point
represents many rotational orientations. For instance, in SE(3), this
means there are only $k^3$ translational points when $n=k^6$,
providing a poor extension of the theory in this paper to the rigid
body planning problem. A better approximate extension is to ``spread''
the points, which entails de-collapsing all the rotational orientations at each
translational lattice point by spreading them deterministically in the
translational dimensions around the original point. An example of this process with one rotational and two spatial dimensions is shown in Figure \ref{fig:spreading}.  A similar phenomenon occurs with kinodynamic sampling, in which case it is beneficial to spread velocity samples. For the double-integrator model, as shown in Figure \ref{fig:vspreading}, this process entails offsetting velocity samples from their original positions by an amount proportional to their velocities. To then reduce unfavorable structure between neighboring lattice points and increase variation in connection types, alternating samples were rotated 180 degrees for the final implementation in Figure \ref{fig:DI_lattice_flip}.

\begin{figure}[h!]
\center
	\subfigure[]{\label{fig:unspread} \includegraphics[width=.3\textwidth]{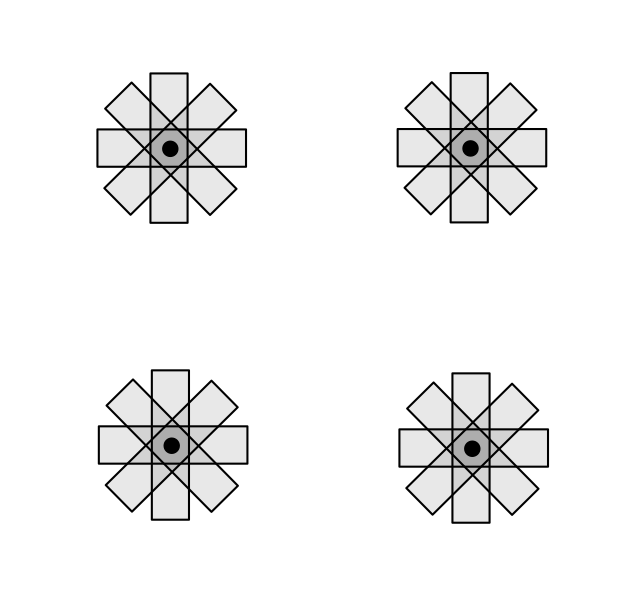}} \qquad\qquad
    \subfigure[]{\label{fig:spread} \includegraphics[width=.3\textwidth]{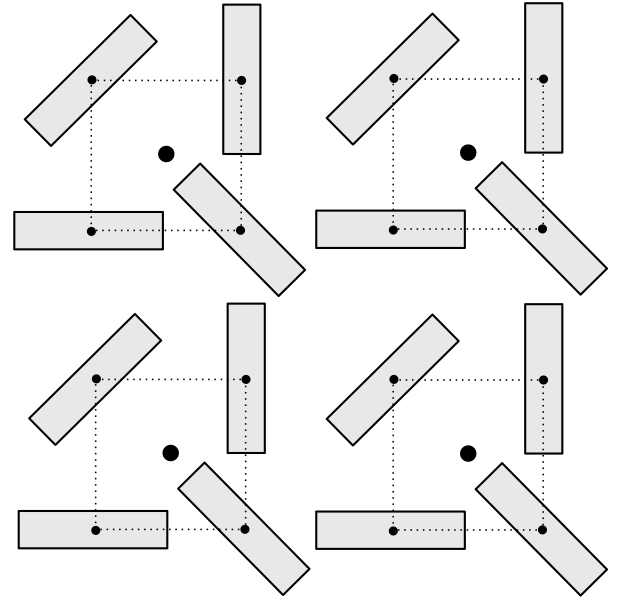}} 
\caption{Figure \ref{fig:unspread}: Rectangular rigid body lattice samples with one rotational and two spatial degrees of freedom. Figure \ref{fig:spread}: A ``spread'' lattice formed by spreading the different rigid body orientations spatially around the original point results in a better spatial coverage of the configuration space.} 
\label{fig:spreading}
\end{figure}

\begin{figure}[h!]
\center
	\subfigure[]{\label{fig:DI_lattice_unspread} \includegraphics[width=0.3\textwidth]{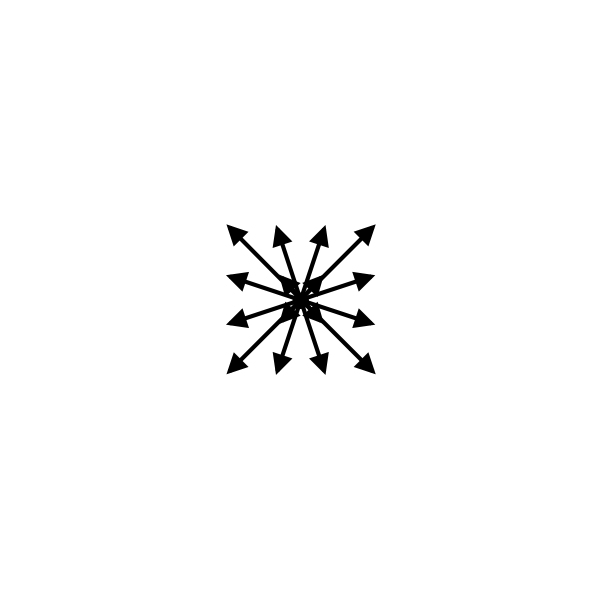}}
	\subfigure[]{\label{fig:DI_lattice} \includegraphics[width=0.3\textwidth]{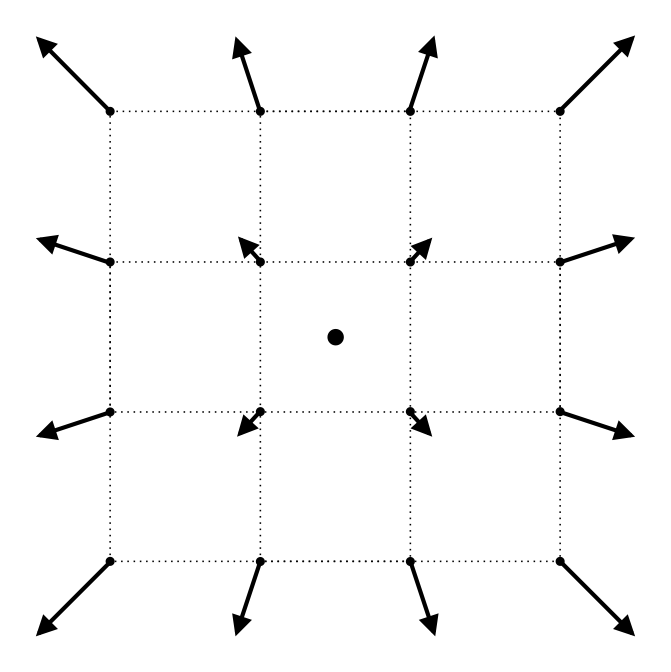}}
	\subfigure[]{\label{fig:DI_lattice_flip} \includegraphics[width=.3\textwidth]{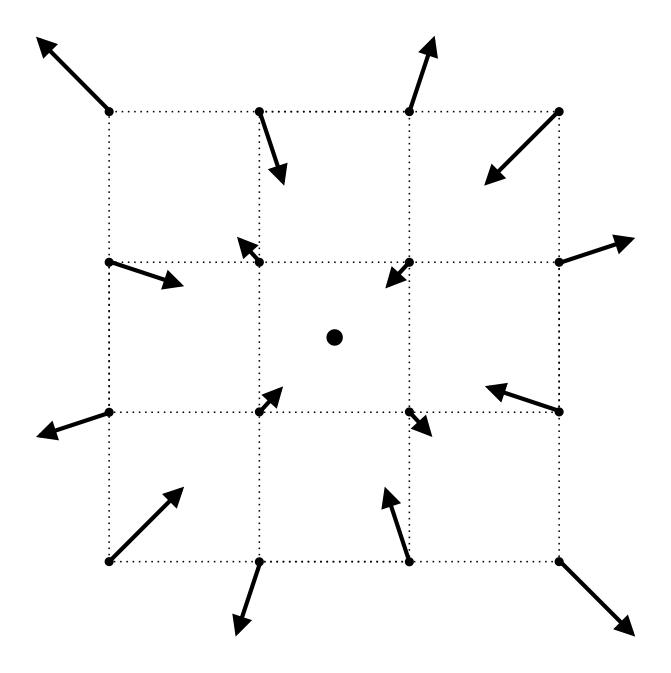}} 
\caption{Figure \ref{fig:DI_lattice_unspread}: A lattice for the double-integrator model with two spatial and two velocity degrees of freedom. Figure \ref{fig:DI_lattice}: A ``spread'' lattice formed by offsetting the velocity samples from their original positions (by an amount proportional to their velocities) to obtain better spatial coverage. Figure \ref{fig:DI_lattice_flip}: The same lattice with alternating samples rotated 180 degrees to increase variation in types of connections and reduce unfavorable structure (this implementation was used for our results).}
\label{fig:vspreading}
\end{figure}


\subsection{Simulation Test Cases}\label{sec:tests}

Within the MATLAB environment, results were simulated for a point
robot within an Euclidean unit hypercube with a variety of geometric obstacles over a range of dimensions. First, rectangular obstacles in 2D and 3D were generated with a fixed configuration that allowed for several homotopy classes of solutions. A 2D
maze with rectangular obstacles was also created (Figure
\ref{fig:maze}). These sets of rectangular obstacles are of particular
interest as they represent a possible ``worst-case'' for lattice-based
sampling because of the potential for axis alignment between samples
and obstacles. The results, shown for
the 2D maze in Figure \ref{fig:rectObs} and for all experiments in
Table \ref{table:summary}, show that Halton and lattice sampling
outperform  i.i.d. sampling in both success rate and solution
cost.


\begin{figure}[h!]
\centering
	\subfigure[]{\label{fig:maze} \includegraphics[width=.3\textwidth]{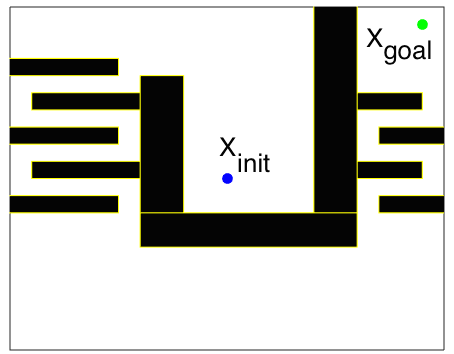}} 
    \subfigure[]{\label{fig:maze_cost} \includegraphics[width=.3\textwidth]{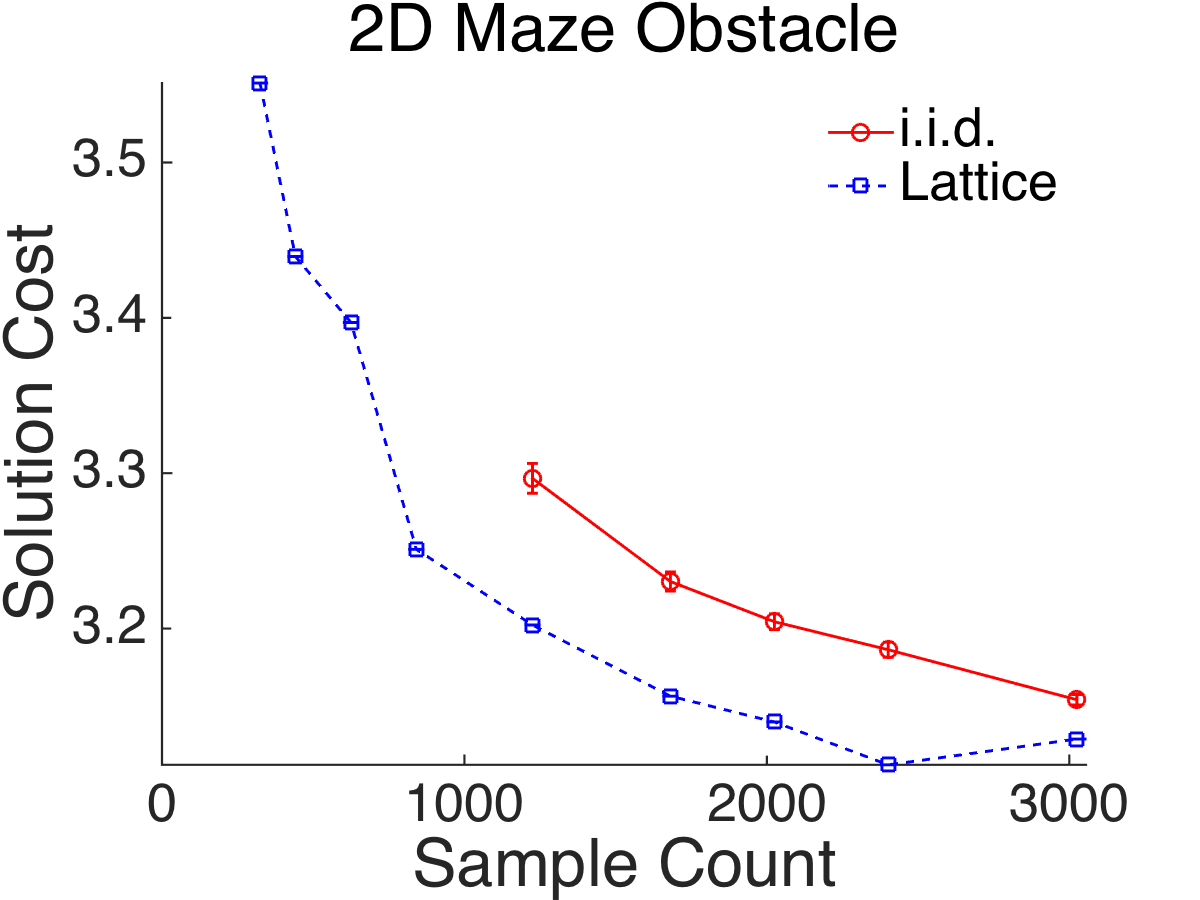}}
	\subfigure[]{\label{fig:maze_success} \includegraphics[width=.3\textwidth]{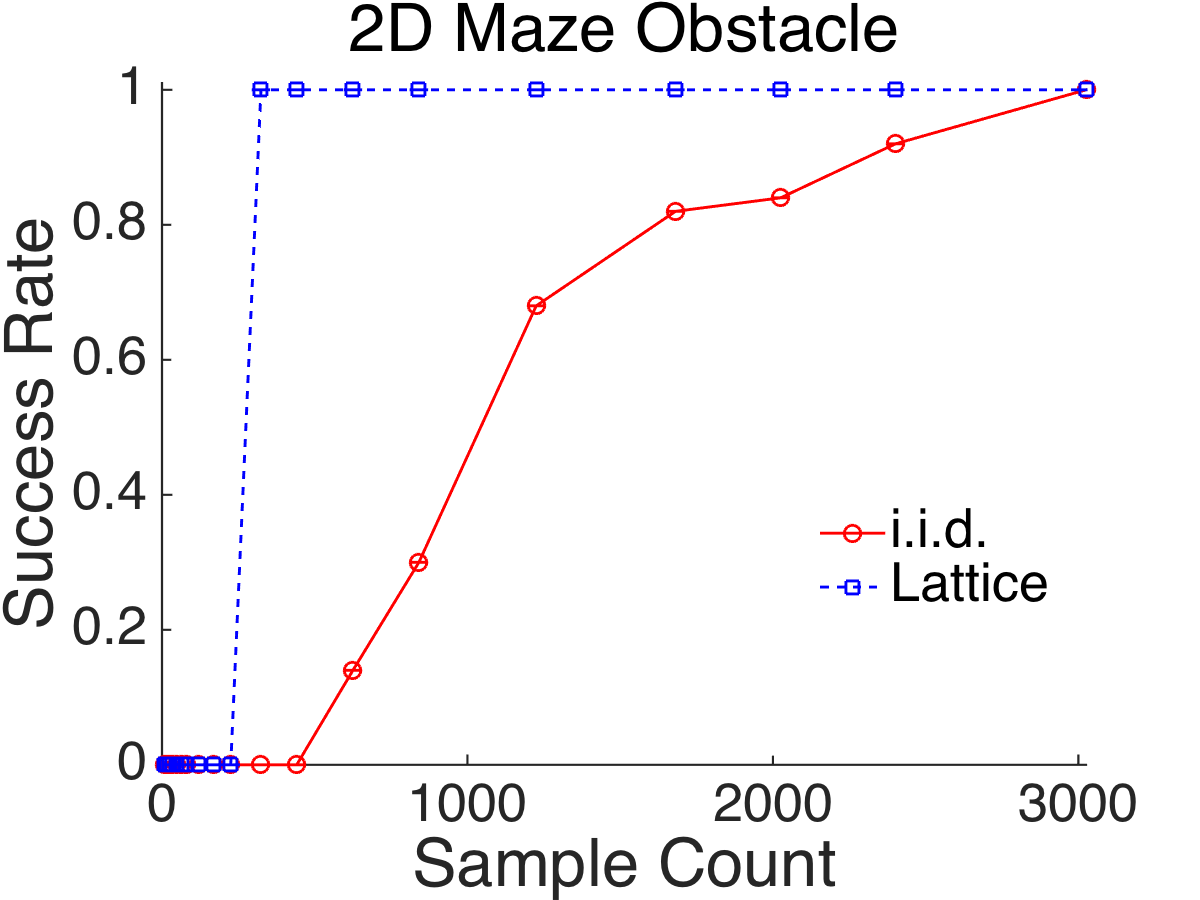}}
\caption{Figure \ref{fig:maze}: The planning setup for a point robot with rectangular obstacles in a 2D maze. Figures \ref{fig:maze_cost} and \ref{fig:maze_success}:  The results for solution cost and success rate versus sample count (averaged over 50 runs). For clarity we only report data for i.i.d. sampling and lattice sequences, results including Halton sampling are reported in Table \ref{table:summary}. Only points with greater than 50\% success are shown in Figure  \ref{fig:maze_cost}.}
\label{fig:rectObs}
\end{figure}


To illustrate planning with rectangular obstacles in higher dimensional
space, we constructed a recursive maze obstacle environment. Each
instance of the maze consists of two copies of the previous dimension's maze, separated by an obstacle with an opening through the new dimension, as detailed in \citep{LJ-ES-AC-ea:15}. Figure \ref{fig:recMaze2} shows the maze in 2D and Figure \ref{fig:recMaze3} shows the maze in 3D with the two copies of the 2D maze in black and the opening in red. Halton and lattice sampling conferred similar benefits in the recursive mazes
in 2D, 3D, 4D, 5D, 6D, and 8D as they did in other simulations (see Table
\ref{table:summary}).

Along with the rectangular obstacles, hyperspherical
obstacles within a Euclidean unit hypercube were generated to compare
performance on a smooth obstacle set with no possibility of axis
alignment between samples and obstacles. The setups for 2D and 3D
(Figures \ref{fig:sphere2} and \ref{fig:sphere3}) were fixed,
while in 4D, obstacles were randomly generated to match a specified
spatial coverage. Again, Halton and lattice sampling consistently outperformed random sampling, as shown in Table~\ref{table:summary}.

As an additional high-dimensional example, an 8D kinematic chain planning problem with rotational joints, eight links, and a fixed base was created (Figure \ref{fig:chain8}). The solution required the chain to be extracted from one opening and inserted into the other, as inspired by \citep{SRL-AY-SML:05}. The chain cost function was set as the sum of the absolute values of the angle differences over all links and the connection radius was thus scaled by $\sqrt{d}\pi$. With this high dimension and new cost function, the Halton and lattice still perform as well as or better than i.i.d. sampling (see Table
\ref{table:summary}).

Within the OMPL environment, rigid body planning problems from the
OMPL test banks were posed for SE(2) and SE(3) configuration
spaces. In the SE(2) case, one rotational and two translational
degrees of freedom are available, resulting in a three dimensional
space, shown in Figure \ref{fig:se2}. The
SE(3) problem consists of an ``L-shaped" robot moving with three
translational and three rotational degrees of freedom, resulting in
six total problem dimensions, shown in Figure
\ref{fig:se3}. As already mentioned, the rigid body planning problems
are not strictly covered by the theory in this paper, and thus the SE(2) and SE(3) lattices use the spreading method
described in Section \ref{sec:methods}. The results, summarized in
Table \ref{table:summary}, show that Halton and lattice sampling generally outperform i.i.d. random sampling. 

Lastly, planning problems with kinodynamic constraints were simulated
within the Julia environment using the gDPRM algorithm defined in
Section \ref{sec:dkp}, with reference to \citep{ES-LJ-MP:15b}. The connection radius was computed using
equation~\eqref{dprm-ed} in Theorem \ref{thrm:gDPRM}. First, a double-integrator model with two spatial dimensions and two velocity
dimensions was posed to simulate a system with drift (Figure
\ref{fig:doubleInt}). To obtain better spatial coverage, the
lattice was spread as described in Section \ref{sec:methods}. 
Second, results were simulated for the Reeds-Shepp
car system \citep{JAR-LAS:90}, a regular driftless control-affine
system with one rotational and two translational degrees of
freedom (Figure \ref{fig:doubleInt}). The Reeds-Shepp car is
constrained to move with a unit speed (forwards or backwards) and turn
with a fixed radius. Although these dynamics are different from those
discussed in Section~\ref{sec:dkp}, we can still define gDPRM with
respect to \citep{ES-LJ-MP:15a} and evaluate the benefits of
deterministic low-dispersion sampling. Again, spreading was used in the spatial dimension. With the addition of kinodynamic constraints, the Halton and lattice sampling continue to outperform i.i.d. random sampling, as summarized in Table \ref{table:summary}.


\begin{figure}[h!]
\center
	\subfigure[]{\label{fig:recMaze2} \includegraphics[width=0.28\textwidth]{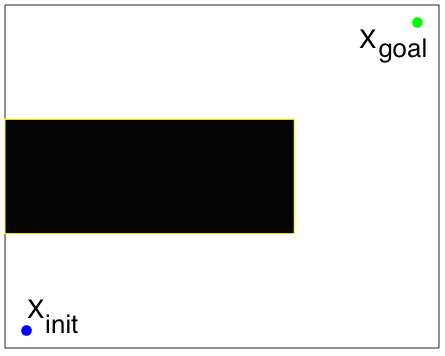}}
	\subfigure[]{\label{fig:recMaze3} \includegraphics[width=0.28\textwidth]{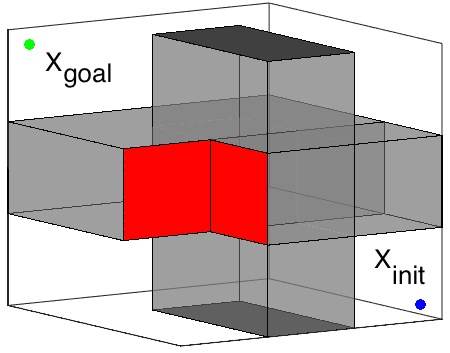}}
	\subfigure[]{\label{fig:sphere2} \includegraphics[width=.25\textwidth]{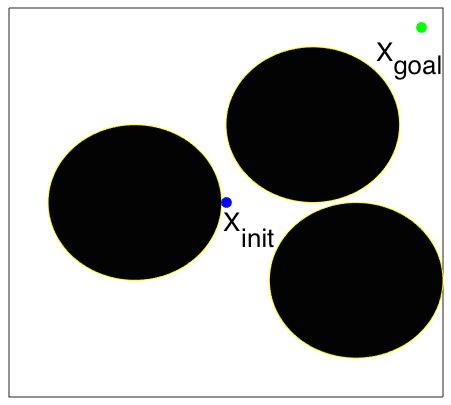}} 
    \subfigure[]{\label{fig:sphere3} \includegraphics[width=.3\textwidth]{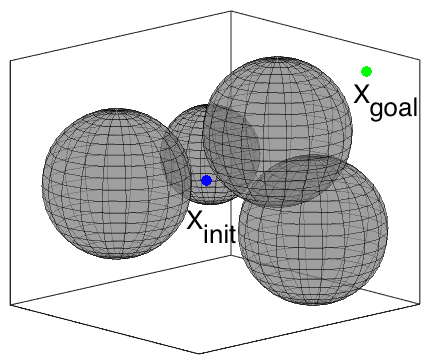}} 
    \subfigure[]{\label{fig:chain8} \includegraphics[width=.296\textwidth]{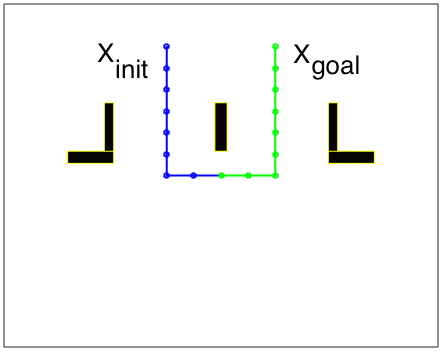}} 
    \subfigure[]{\label{fig:se2} \includegraphics[width=.23\textwidth]{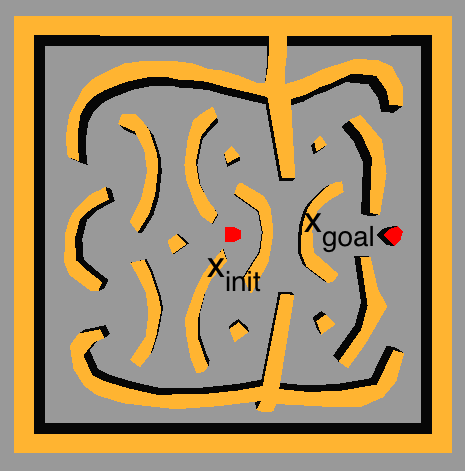}} 
    \subfigure[]{\label{fig:se3} \includegraphics[width=.28\textwidth]{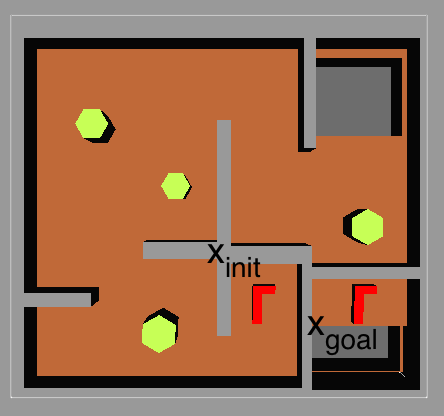}} 
    \subfigure[]{\label{fig:doubleInt} \includegraphics[width=.263\textwidth]{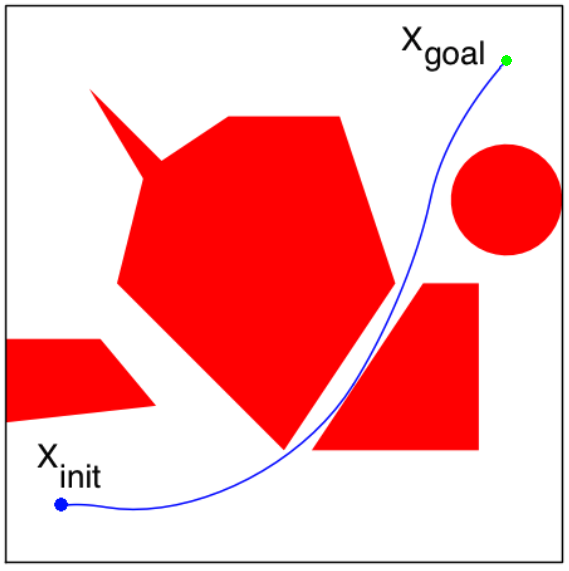}} 
\caption{Images of the recursive maze planning problem in 2D
  (\ref{fig:recMaze2}) and 3D (\ref{fig:recMaze3}) and the spherical
  obstacle sets in 2D (\ref{fig:sphere2}) and 3D
  (\ref{fig:sphere3}). Also shown are an 8D kinematic chain planning
  problem in \ref{fig:chain8} and the OMPL rigid body planning
  problems for SE(2) and SE(3) in \ref{fig:se2} and \ref{fig:se3}
  respectively. Lastly, \ref{fig:doubleInt} shows the setup for the 
  double-integrator and Reeds-Shepp car. Note that the problems in \ref{fig:doubleInt} have kinodynamic constraints, where we are not only considering straight line connections, but ones with curvature. A summary of results can be found in Table \ref{table:summary}.}
\label{fig:problems}
\end{figure}

\subsection{Summary of Results}\label{sec:sum}

Table \ref{table:summary} shows a summary of the results from
simulations detailed in Section \ref{sec:tests}. Results are shown normalized by the
i.i.d. sampling results. In each case the sample count at which
a success rate greater than 90\% is achieved and sustained is
reported. Additionally, the solution costs at a medium and high
sampling count are shown. For all cases the lattice sampling finds a
solution with fewer or an equal number of samples and of lower or
equal cost than that found by i.i.d. sampling. The Halton sampling
also always finds a solution at lower sample counts than i.i.d. sampling, and almost
always finds solutions of lower cost as well. The deterministic low-dispersion
sequences particularly outperform random sampling in terms of number
of samples required for a 90\% success rate.

\begin{table}[h!]
\begin{center}	
	\begin{tabular}{|c|c|c@{\hskip 0.1in}|c|c|c|c@{\hskip 0.1in}|c|c|c|}
	\multicolumn{3}{c}{} & \multicolumn{3}{c}{Halton} & \multicolumn{1}{c}{} & \multicolumn{3}{c}{Lattice} \\
	\cline{1-2} \cline{4-6} \cline{8-10}
	Dim & Obstacles && 90\% Success & Medium & High && 90\% Success & Medium & High \\
	\hline \hline
2 & Rectangular && 38\% & 118\% & 80\% && 15\% & 56\% & 80\% \\
3 & Rectangular && 36\% & 88\% & 94\% && 19\% & 80\% & 87\% \\
2 & Rect Maze && 13\% & 98\% & 99\% && 13\% & 100\% & 99\% \\
\hline \hline
2 & Sphere && 16\% & 93\% & 99\% && 7\% & 93\% & 99\% \\
3 & Sphere && 36\% & 97\% & 100\% && 8\% & 97\% & 99\% \\
4 & Sphere && 100\% & 97\% & 97\% && 100\% & 97\% & 100\% \\
\hline \hline
2 & Recursive Maze && 33\% & 100\% & 100\% && 18\% & 100\% & 100\% \\
3 & Recursive Maze && 22\% & 95\% & 99\% && 22\% & 96\% & 98\% \\
4 & Recursive Maze && 56\% & 95\% & 98\% && 56\% & 100\% & 100\% \\
5 & Recursive Maze && 45\% & 97\% & 96\% && 60\% & 95\% & 96\% \\
6 & Recursive Maze && 56\% & 95\% & 97\% && 75\% & 94\% & 96\% \\
8 & Recursive Maze && 56\% & 98\% & 99\% && 75\% & 99\% & 99\% \\
\hline \hline
8 & Chain && 67\% & 112\% & 91\% && 7\% & 76\% & 87\% \\
\hline \hline
3 & SE(2) && 81\% & 96\% & 100\% && 81\% & 101\% & 101\% \\
6 & SE(3) && 32\% & 96\% & 93\% && 42\% & 94\% & 95\% \\
\hline \hline
4 & Double-Integrator && 53\% & 90\% & 93\% && 30\% & 92\% & 96\% \\
3 & Reeds-Shepp Car && 44\% & 97\% & 99\% && 44\% & 98\% & 100\% \\
	\hline
	\end{tabular}
\end{center}
\caption{Summary of results. Each entry is divided by the results of i.i.d. sampling (averaged over 50 runs). For Halton sampling and lattice sampling, the number of samples at which 90\% success is achieved and the cost at a medium number of samples (near 700) and a high number of samples are shown (highest samples simulated, always 3000 or greater). Note that nearly all table entries are below 100\%, meaning the Halton and lattice sampling outperformed i.i.d. sampling.}
\label{table:summary}
\end{table}

\subsection{Nondeterministic Sampling Sequences}\label{sec:nondet}

The above simulations showed deterministic lattice sampling, with a
fixed rotation around each axis, and the deterministic Halton
sequence outperform uniform i.i.d. sampling. Both deterministic
sequences have low $\ell_2$-dispersions of $O(n^{-1/d})$, but sequences
with the same order $\ell_2$-dispersion need not be deterministic. Figure
\ref{fig:nondet} shows results for a randomly rotated and randomly
offset version of the lattice (again, the $\ell_2$-dispersion and
neighborhoods are all still deterministically the same). The same cases in Table
\ref{table:summary} were run for the randomly rotated lattice and the
results showed it performed as well as or better than random sampling
(over 50 runs). In general, low-dispersion random sequences might provide some advantages, e.g., eliminating axis alignment issues while still enjoying deterministic guarantees (see Section \ref{sec:theory}). Their further study represents an interesting direction for future research.

\begin{figure}[h!]
\center
	\subfigure[]{\label{fig:maze_cost_nonDet} \includegraphics[width=0.3\textwidth]{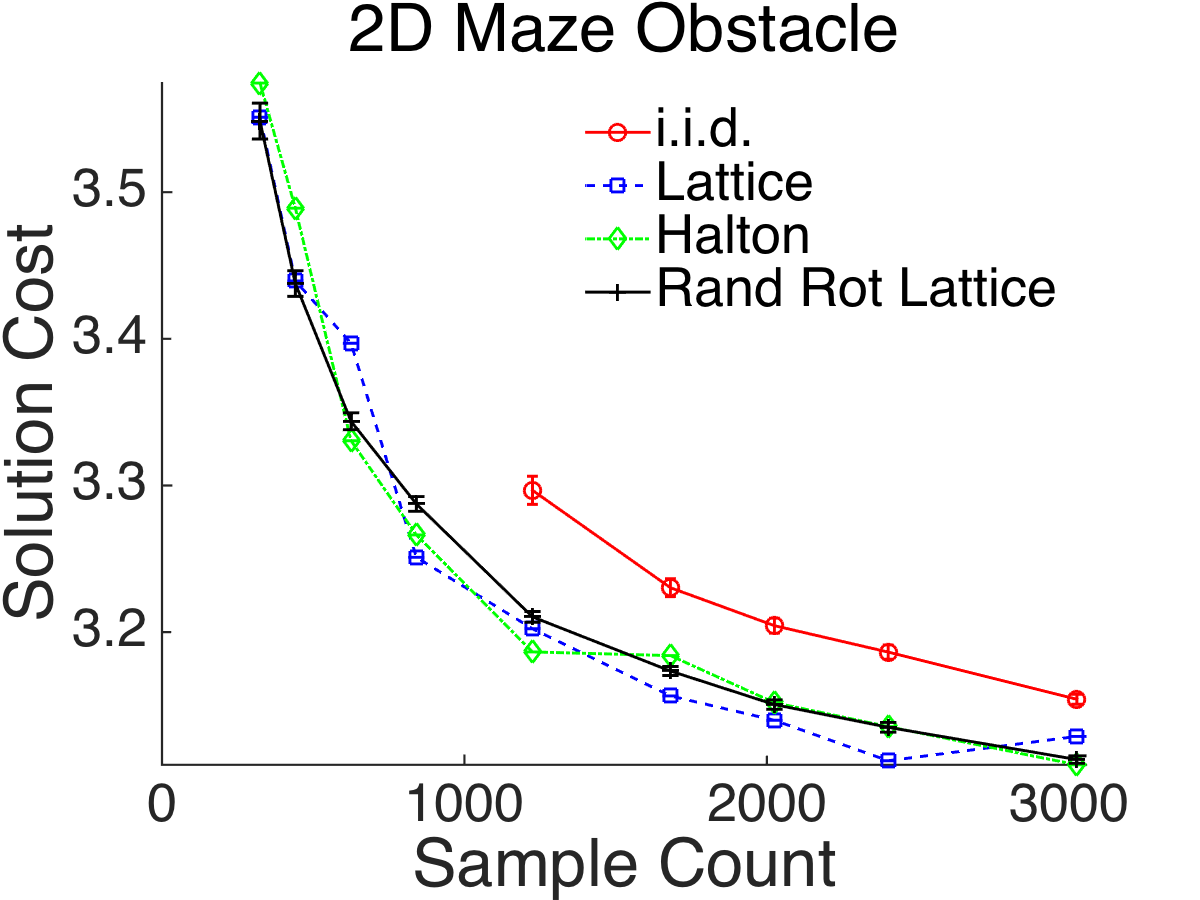}}
    \subfigure[]{\label{fig:maze_success_nonDet} \includegraphics[width=.3\textwidth]{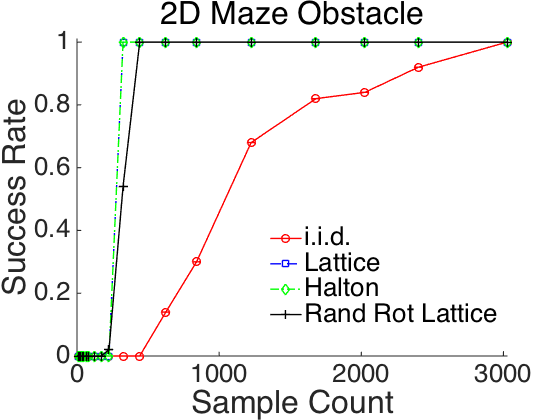}} 
    \subfigure[]{\label{fig:se3_cost_nonDet} \includegraphics[width=.3\textwidth]{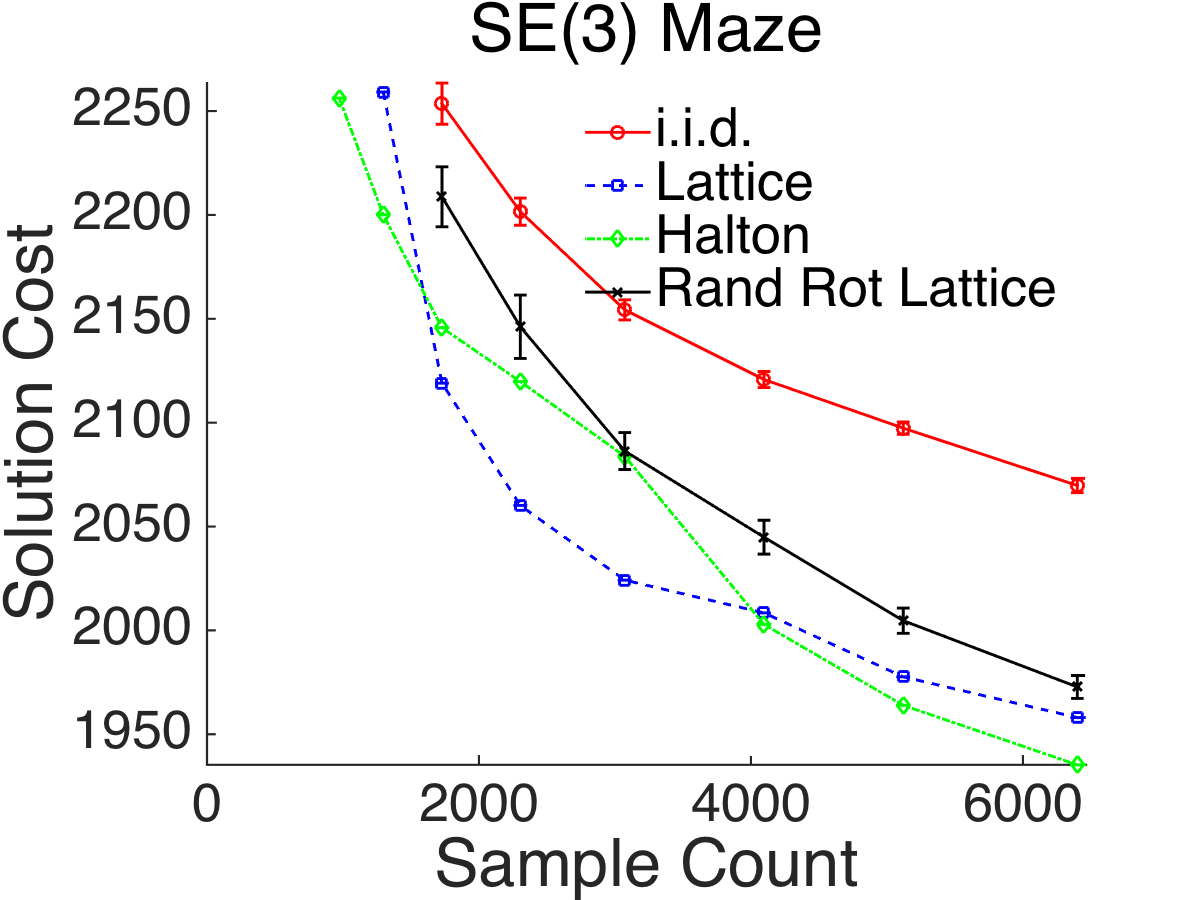}} 
\caption{Results for deterministic and nondeterministic low-dispersion sampling. ``Rand Rot Lattice'' refers to a randomly rotated lattice.}
\label{fig:nondet}
\end{figure}

\section{Conclusions}\label{sec:conc}
This paper has shown that using low-dispersion
sampling strategies (in particular, deterministic) can provide substantial benefits for solving the
optimal path planning problem with sampling-based algorithms, in terms of deterministic performance guarantees, reduced computational complexity per given number of samples, and superior practical performance.

This paper opens several directions for future research. First, we plan to deepen our study of deterministic kinodynamic motion planning, in particular in terms of tailored notions of 
sampling sequences and dispersion and more general dynamical models. Second, it is of interest to extend the results herein to
other classes of sampling-based motion planning algorithms (beyond the ones studied in this paper), especially
the large class of \emph{anytime} algorithms (e.g., RRT/\RRTstar\!). This leads directly into
a third key direction, which is to study alternative low-dispersion
sampling strategies beyond the few considered here, particularly
\emph{incremental} sequences for use in anytime algorithms. There is
already some work in this area, although thus far it has focused on the use of
such sequences for the feasibility problem \citep{AY-SML:04,
  SRL-AY-SML:05, AY-SJ-SML-ea:09}. It may also be of interest to study
low-dispersion sampling strategies that incorporate prior knowledge of
the problem by sampling non-uniformly, as discussed in Section \ref{sec:nus}. Fourth, we
plan to investigate the topological relationship
between the optimal path cost and that of the best
strong-$\delta$-clear path, in order to frame the convergence rate in
terms of the true optimal cost. Fifth, from a practical standpoint, it is of interest to adapt existing algorithms or design new ones that explicitly leverage the structure of low-dispersion sequences (e.g., fast nearest neighbor indexing or precomputed data structures). This would be especially beneficial in the domain of kinodynamic motion planning. Finally, leveraging our convergence rate results, we plan to further investigate the issue of certification for sampling-based planners, e.g., in the context of trajectory planning for drones or self-driving cars.

\section*{Acknowledgements}
This work was supported  by NASA under the Space Technology Research Grants Program, Grant NNX12AQ43G.  Lucas Janson was partially supported by NIH training grant T32GM096982. Brian Ichter was supported by the Department of Defense (DoD) through the National Defense Science \& Engineering Graduate Fellowship (NDSEG) Program.

\bibliographystyle{abbrvnat}
\bibliography{../../../bib/alias,../../../bib/main}

\end{document}